\documentclass[10pt,letterpaper]{article}

\usepackage[margin=0.75in,top=1in,bottom=1in]{geometry}
\usepackage[hyphens]{url}
\usepackage{graphicx}
\usepackage{natbib}
\usepackage{caption}

\usepackage{amsmath,amssymb,amsthm,mathtools}
\usepackage{booktabs}
\usepackage{array}
\usepackage{xcolor}
\usepackage{colortbl}
\usepackage{subcaption}
\usepackage{multirow}
\usepackage{algorithm}
\usepackage{algpseudocode}
\usepackage{fancyvrb}
\usepackage{tcolorbox}
\tcbuselibrary{breakable}
\usepackage{tikz}
\usetikzlibrary{arrows.meta,positioning,calc,fit,decorations.pathreplacing}
\usepackage{microtype}

\usepackage{titlesec}

\titleformat{\section}
  {\centering\bfseries\fontsize{11}{13.2}\selectfont}
  {\thesection.}{0.4em}{}
\titlespacing*{\section}{0pt}{10pt plus 2pt minus 2pt}{5pt plus 1pt}

\titleformat{\subsection}
  {\bfseries\normalsize}
  {\thesubsection}{0.4em}{}
\titlespacing*{\subsection}{0pt}{8pt plus 2pt minus 2pt}{3pt plus 1pt}

\titleformat{\subsubsection}
  {\bfseries\itshape\normalsize}
  {\thesubsubsection}{0.4em}{}
\titlespacing*{\subsubsection}{0pt}{6pt plus 2pt}{2pt}

\titleformat{\paragraph}[runin]
  {\bfseries\normalsize}
  {}{0pt}{}[.\enspace]
\titlespacing*{\paragraph}{0pt}{4pt plus 1pt}{0pt}

\usepackage[colorlinks=true,linkcolor=blue!60!black,
            citecolor=blue!60!black,urlcolor=blue!60!black]{hyperref}
\usepackage[capitalize,nameinlink]{cleveref}

\crefname{figure}{Fig.}{Figs.}
\Crefname{figure}{Figure}{Figures}
\crefname{table}{Table}{Tables}
\Crefname{table}{Table}{Tables}
\crefname{equation}{Eq.}{Eqs.}
\Crefname{equation}{Equation}{Equations}
\crefname{section}{Sec.}{Secs.}
\Crefname{section}{Section}{Sections}
\crefname{appendix}{App.}{Apps.}
\Crefname{appendix}{Appendix}{Appendices}
\crefname{algorithm}{Algorithm}{Algorithms}
\Crefname{algorithm}{Algorithm}{Algorithms}
\crefname{proposition}{Proposition}{Propositions}
\Crefname{proposition}{Proposition}{Propositions}
\crefname{corollary}{Corollary}{Corollaries}
\Crefname{corollary}{Corollary}{Corollaries}
\crefname{lemma}{Lemma}{Lemmas}
\Crefname{lemma}{Lemma}{Lemmas}
\crefname{theorem}{Theorem}{Theorems}
\Crefname{theorem}{Theorem}{Theorems}
\crefname{definition}{Definition}{Definitions}
\Crefname{definition}{Definition}{Definitions}
\crefname{remark}{Remark}{Remarks}
\Crefname{remark}{Remark}{Remarks}

\providecommand{\texorpdfstring}[2]{#1}

\makeatletter
\@ifundefinedcolor{linkblue}{\definecolor{linkblue}{RGB}{34,79,135}}{}
\@ifundefinedcolor{codegray}{\definecolor{codegray}{RGB}{244,245,247}}{}
\@ifundefinedcolor{tblhead}{\definecolor{tblhead}{RGB}{240,243,247}}{}
\@ifundefinedcolor{tblgroup}{\definecolor{tblgroup}{RGB}{247,248,250}}{}
\@ifundefinedcolor{tblours}{\definecolor{tblours}{RGB}{236,244,255}}{}
\@ifundefinedcolor{tblref}{\definecolor{tblref}{RGB}{248,248,248}}{}
\makeatother

\providecommand{\R}{\mathbb{R}}
\providecommand{\E}{\mathbb{E}}
\providecommand{\Var}{\mathrm{Var}}

\providecommand{\dpi}{d_\pi}
\providecommand{\bigpo}{\textnormal{\scshape BiGPO}}
\providecommand{\bipace}{\textnormal{\scshape BiPACE}}
\providecommand{\gigpo}{\textnormal{\scshape GiGPO}}
\providecommand{\hgpo}{\textnormal{\scshape HGPO}}

\providecommand{\mico}{\textnormal{\scshape MICo}}

\providecommand{\code}[1]{\colorbox{codegray}{\texttt{\small #1}}}

\providecommand{\tabstretch}{\renewcommand{\arraystretch}{1.15}}
\providecommand{\tblgroup}[2]{\rowcolor{tblgroup}\multicolumn{#1}{@{}l}{\emph{#2}}}

\theoremstyle{plain}
\newtheorem{proposition}{Proposition}

\newtheorem{corollary}{Corollary}

\theoremstyle{definition}

\newtheorem{remark}{Remark}

\title{\textbf{\bipace{}: Bisimulation-Guided Policy Optimization\\
with Action Counterfactual Estimation for LLM Agents}}

\author{%
  Hanyang Wang\textsuperscript{1,\dag}\quad
  Weijieying Ren\textsuperscript{3}\quad
  Yuxiang Zhang\textsuperscript{2}\quad
  Ding Cao\textsuperscript{4}\\[0.2em]
  Zhizhao Zeng\textsuperscript{5}\quad
  Ke Zeng\textsuperscript{5}\quad
  Tianxiang Zhao\textsuperscript{2,*}\\[0.4em]
  \textsuperscript{1}University of Chicago\quad
  \textsuperscript{2}The Hong Kong University of Science and Technology (Guangzhou)\\[0.15em]
  \textsuperscript{3}Stanford University\quad
  \textsuperscript{4}University of Science and Technology of China\quad
  \textsuperscript{5}Meituan\\[0.3em]
  \small\texttt{hanyangw@uchicago.edu}\quad
  \small\texttt{wjyren@stanford.edu}\quad
  \small\texttt{yxzhang25128@gmail.com}\quad
  \small\texttt{caoding@mail.ustc.edu.cn}\\[0.1em]
  \small\texttt{\{zengzhizhao,zengke02\}@meituan.com}\quad
  \small\texttt{tianxiangz@hkust-gz.edu.cn}\\[0.2em]
  \small\textsuperscript{\dag}First author\quad
  \textsuperscript{*}Corresponding author
}
\date{}

\begin{document}

\twocolumn[
  \maketitle
  \vspace{-0.5em}
  \begin{center}
  \begin{minipage}{0.92\textwidth}
  \begin{abstract}
Stepwise group-based RL is an attractive way to train long-horizon LLM
agents without a learned critic: it reuses multiple sampled rollouts to
estimate local advantages. Its weakness is less visible but more
fundamental: every group-relative estimator assumes that the steps it
compares are equivalent for credit assignment. We show that current
agentic variants violate this assumption through a \emph{state-action
credit mismatch}. The observation-hash partition is overly fine on
the state side, creating singleton groups with zero step-level signal,
while a single within-group mean is too coarse on the action side,
mixing state-value estimation with action-specific credit. We introduce
\textbf{\bipace{}} (\emph{Bisimulation-Guided Policy Optimization with
Action Counterfactual Estimation}), a drop-in advantage estimator that
fixes both sides without adding a critic, auxiliary loss, or extra
rollouts. \textbf{\bigpo{}} clusters steps by cosine distance in the
actor's own hidden-state geometry, an empirical, policy-induced proxy for
bisimulation that substantially lowers the singleton rate left by
observation hashing. \textbf{PACE} then recenters returns within each
behavioral cluster using action-conditioned peer baselines; its Q-style
instance estimates a local $\widehat Q(s,a)-\widehat V(s)$
nonparametrically. On \textsc{ALFWorld}/Qwen2.5-7B,
\bipace$_{\text{Q}}$ raises overall validation success from
\gigpo{}'s reported $90.8$ to $\mathbf{97.1{\pm}0.9}$ over three
seeds, and crosses the $95\%$ threshold on every seed, which \gigpo{}
never does within the same budget. On Qwen2.5-1.5B it reaches
$\mathbf{93.5{\pm}1.2}$ versus \gigpo{}'s $86.7$, and on
\textsc{WebShop} and \textsc{TextCraft} it improves over GRPO and
\gigpo{} at both model scales. The change is small in systems terms:
the measured BiPACE-specific share is $11.3\%$ of a single
\textsc{ALFWorld}/Qwen2.5-7B training-step wall time. Yet it changes the
estimator's comparison unit from surface identity to approximate
behavioral equivalence plus action-side counterfactuals.
The code is available at \url{https://github.com/TianxiangZhao/BiPACE}.
  \end{abstract}
  \end{minipage}
  \end{center}
  \vspace{0.5em}
]


\begin{figure*}[t]
\centering
\includegraphics[width=\textwidth]{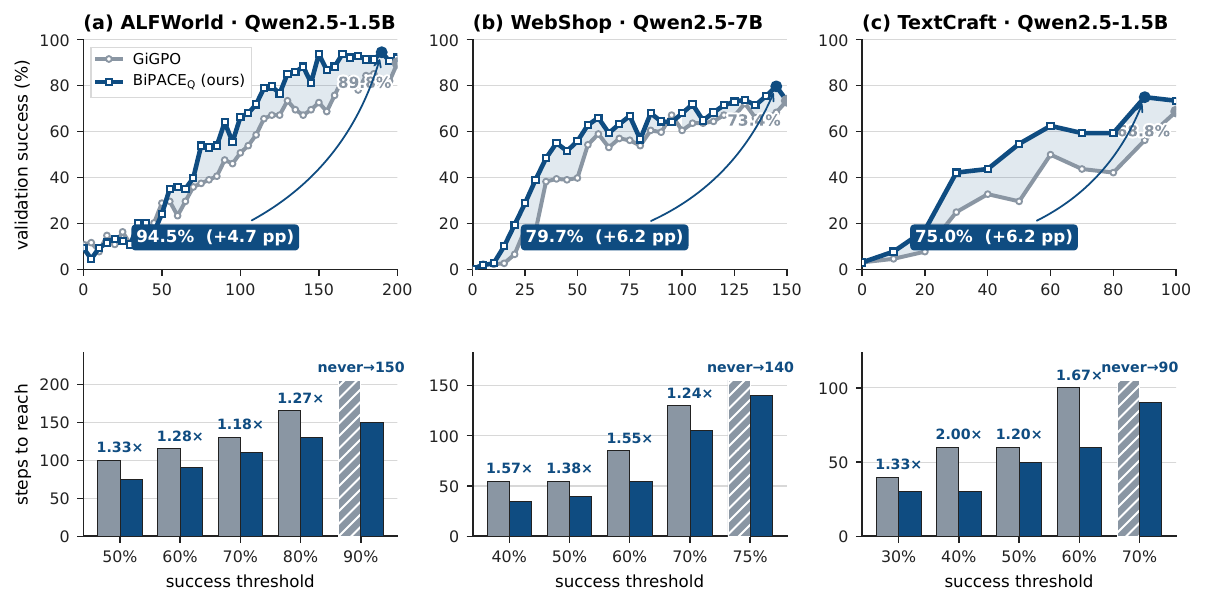}
\captionsetup{font=footnotesize}
\caption{\textbf{\bipace$_{\text{Q}}$ vs.\ \gigpo{} across benchmarks
and model scales.}
\textbf{Top:} validation success over training; dots and badges mark
each method's peak and \bipace$_{\text{Q}}$'s gap over \gigpo{}.
\textbf{Bottom:} steps to reach a fixed success threshold (lower is
better), with speedups
$\text{steps}_{\gigpo{}}/\text{steps}_{\mathrm{BiPACE}_{\text{Q}}}$;
hatched bars mark thresholds never reached. Multi-seed aggregates are
in \cref{tab:main,tab:textcraft}.}
\label{fig:teaser}
\end{figure*}

\section{Introduction}\label{sec:intro}

Reinforcement learning (RL) post-training of large language models has
recently moved beyond single-turn reasoning into the harder
\emph{agentic} regime: long-horizon, partially observed, multi-turn
interaction with tools, web pages, simulated households, and games.
The central obstacle is assigning a sparse terminal reward to the
intermediate decisions that made the trajectory succeed or fail~\citep{wang2025text2grad}.
Group-based RL
methods such as RLOO-style leave-one-out
estimators~\citep{kool2019buy,ahmadian2024back} and
GRPO~\citep{shao2024deepseekmath} are appealing because they avoid a
learned value network. Recent agentic variants such as
\gigpo{}~\citep{feng2026group} and \hgpo{}~\citep{he2026hierarchy} push
this idea to the step level by comparing rollout steps inside groups.
Their performance, however, depends on a choice that is often treated
as an implementation detail: which steps are grouped together for that
comparison.

These estimators share an implicit assumption: if two step records are
placed in the same group, then they are interchangeable for credit
assignment. In long-horizon agent environments, this assumption fails
in two coupled ways. \textbf{State side:} observation identity is a
convenient but overly sparse proxy for value equivalence. Group
baselines are reliable when grouped states share continuation value, a
condition formalized by \emph{bisimulation}
\citep{givan2003equivalence,ferns2004metrics}; observation keys impose
a strictly finer equivalence than bisimulation requires, splitting many
reusable states into isolated singletons. \textbf{Action side:} even
when states are comparable, the usual within-group mean assigns the
same baseline to all actions, ignoring that different actions from the
same state can induce different futures. We call this two-sided
failure the \emph{state-action credit mismatch}.

This mismatch is measurable during training via the singleton
fraction, independently of the final task reward. In our \gigpo{}
reproduction on \textsc{ALFWorld}, $34.2\%$ of step groups are
singletons at iteration 10 and the fraction remains $20.7\%$ at
iteration 140.
Since singleton clusters produce zero step-level advantage, exact
observation hashes discard local signal when the policy needs it most.
On \textsc{TextCraft}, where observations are sparser, exact hashes
isolate even more records and expose fewer matched pairs (detailed
in \cref{sec:mechanism}).

We propose \bipace{} (\emph{Bisimulation-Guided Policy Optimization
with Action Counterfactual Estimation}), a drop-in advantage estimator
that treats step-level credit as two local problems: state aggregation
and action-conditioned credit assignment. \textbf{On the state side},
\bigpo{} replaces observation hashing with cosine clustering on the
actor's normalized hidden state $\phi_\theta(s_t)$ at a fixed late
layer (\cref{app:hparams,app:scale-calibration}), an empirical proxy
for the behavioral metrics of \citet{castro2021mico}. \textbf{On the action side},
PACE partitions each behavioral cluster by the executed action and
augments the cluster-mean baseline with a same-action peer estimate,
forming a nonparametric $\widehat{Q}(s,a)-\widehat{V}(s)$ advantage
inside each cluster. The two halves are coupled: PACE requires behaviorally comparable
state peers, which BiGPO provides.
Our main contributions are summarized as follows:
\begin{itemize}
\item \textbf{\emph{Identifying state-action credit mismatch.}}
  We show that stepwise group-based RL conflates state aggregation with
  action-conditioned credit assignment, and that exact observation
  hashing is the wrong state equivalence relation, splitting reusable
  states into singletons that carry no step-level signal.
\item \textbf{\emph{Proposing a drop-in advantage estimator.}}
  We introduce \bipace{}, which makes two local replacements:
  \bigpo{} clusters actor-hidden fingerprints as a policy-induced
  bisimulation proxy, and PACE adds an action-conditioned peer baseline
  inside each cluster.
\item \textbf{\emph{Analyzing the estimator.}}
  We bound the state-side bias by $O(\varepsilon)$ under a
  \mico{}-Lipschitz assumption, recover \gigpo{} as the
  $\varepsilon{=}0$ limit, quantify the singleton signal loss, and show
  Q-style PACE is exact under exact bisimulation.
\item \textbf{\emph{Achieving strong empirical performance.}}
  \bipace$_{\text{Q}}$ gains $+6.3$pp on \textsc{ALFWorld}/Qwen2.5-7B
  ($97.1{\pm}0.9$ vs.\ $90.8$) and $+6.8$pp on 1.5B, and improves over
  GRPO and \gigpo{} on \textsc{WebShop} and \textsc{TextCraft} at both
  scales, at only $11.3\%$ step overhead.
\end{itemize}

\section{Related Work}\label{sec:related}

{\sloppy
\paragraph{Group-relative RL for LLM agents}
\bipace{} builds on critic-free group-relative RL, including
GRPO~\citep{shao2024deepseekmath}, \gigpo{}~\citep{feng2026group}, and
\hgpo{}~\citep{he2026hierarchy}. These methods compare sampled
returns inside groups but keep the state equivalence relation discrete;
\bipace{} replaces that relation with a policy-induced behavioral
partition. The state side follows the value-preserving bisimulation
view~\citep{ferns2004metrics,castro2021mico,zhang2020learning}, while
PACE gives a nonparametric analogue of action-conditioned
counterfactual baselines studied in COMA/CCPO and related work~\citep{foerster2018counterfactual,li2026counterfactual}. Other
agent-credit methods alter the learning signal or optimizer~\citep{tan2026hindsight,liu2025agentic,wei2025reinforcing,yu2026dapo};
\bipace{} instead changes which step records are compared. Extended
discussion appears in \cref{app:extended-related}.
\par}

\section{Method: \bipace{}}\label{sec:method}

This section first isolates the estimation issue that \bipace{}
targets, then describes the two local replacements that constitute
the method.

\subsection{Estimator setup}
\label{sec:method-setup}
For each prompt group $p$, GRPO samples trajectories
$\{\tau^{(g)}\}_{g=1}^G$ and standardizes terminal returns within the
group,
\begin{equation}
A^{\mathrm{ep}}(\tau^{(g)}) =
\frac{R(\tau^{(g)})-\mu_p}{\sigma_p+\delta},
\quad \mu_p,\sigma_p \text{ over } \{R(\tau^{(g)})\}_{g=1}^G .
\label{eq:episode-adv}
\end{equation}
\gigpo{} adds a step-level term by collecting all step records in the
same prompt group and partitioning them by exact observation hash:
$\mathcal{C}_p = \big\{\,\{i:\mathrm{hash}(s^{(i)})=h\}: h\in \mathrm{Hash}(\{s^{(i)}\}_{i\in p})\,\big\}$.
For each cluster $C\in\mathcal{C}_p$, it normalizes the return-to-go
$R_t^{(i)}$ locally:
\begin{equation}
A^{\mathrm{step}}(i) =
\frac{R_t^{(i)}-\mu_C}{\sigma_C+\delta}, \quad i\in C .
\label{eq:step-adv}
\end{equation}
\bipace{} keeps this training loop intact and replaces only the
partition and the local baseline used by \cref{eq:step-adv}. Extra background on the agent decision process,
GRPO, and bisimulation appears in \cref{app:preliminaries}.

\subsection{The state-action credit mismatch}
\label{sec:method-mismatch}
\gigpo{} improves over trajectory-level GRPO by computing a
step-level advantage within groups of rollout steps that share the
same current observation. This design assumes that two step records in
the same group are exchangeable for credit assignment. In agentic
tasks, the assumption breaks in two complementary ways.

\textbf{State side.}
Exact observation identity is too strict a proxy for value equivalence:
observations that differ only in surface form land in different groups
even when they induce the same continuation value. When the exact-key
partition produces singleton groups, the within-group baseline in
\cref{eq:step-adv} degenerates to zero and the step contributes no
step-level gradient.

\textbf{Action side.}
Even when a group contains behaviorally comparable states, computing a
single cluster mean evaluates every step against the same number,
regardless of which action was taken. Two actions from the same state
neighborhood can lead to different futures, so their advantages should
differ; but subtracting a common mean cannot distinguish between them.
The desired quantity is the local advantage $Q(s,a)-V(s)$: the cluster
mean serves as a state-value estimate $V(s)$, and pooling same-action
peers yields an action-value estimate $Q(s,a)$, isolating
action-specific credit without changing the state-level baseline.

\bipace{} addresses these two sides jointly. On the state side,
\bigpo{} replaces the observation-hash partition with a behavioral
partition derived from the actor's own hidden states. On the action
side, PACE augments the cluster-mean baseline with a same-action peer
estimate, forming the $Q(s,a)-V(s)$ advantage inside each cluster.

\subsection{BiPACE overview}
\label{sec:method-overview}

\begin{figure*}[t]
\centering
\includegraphics[width=\textwidth]{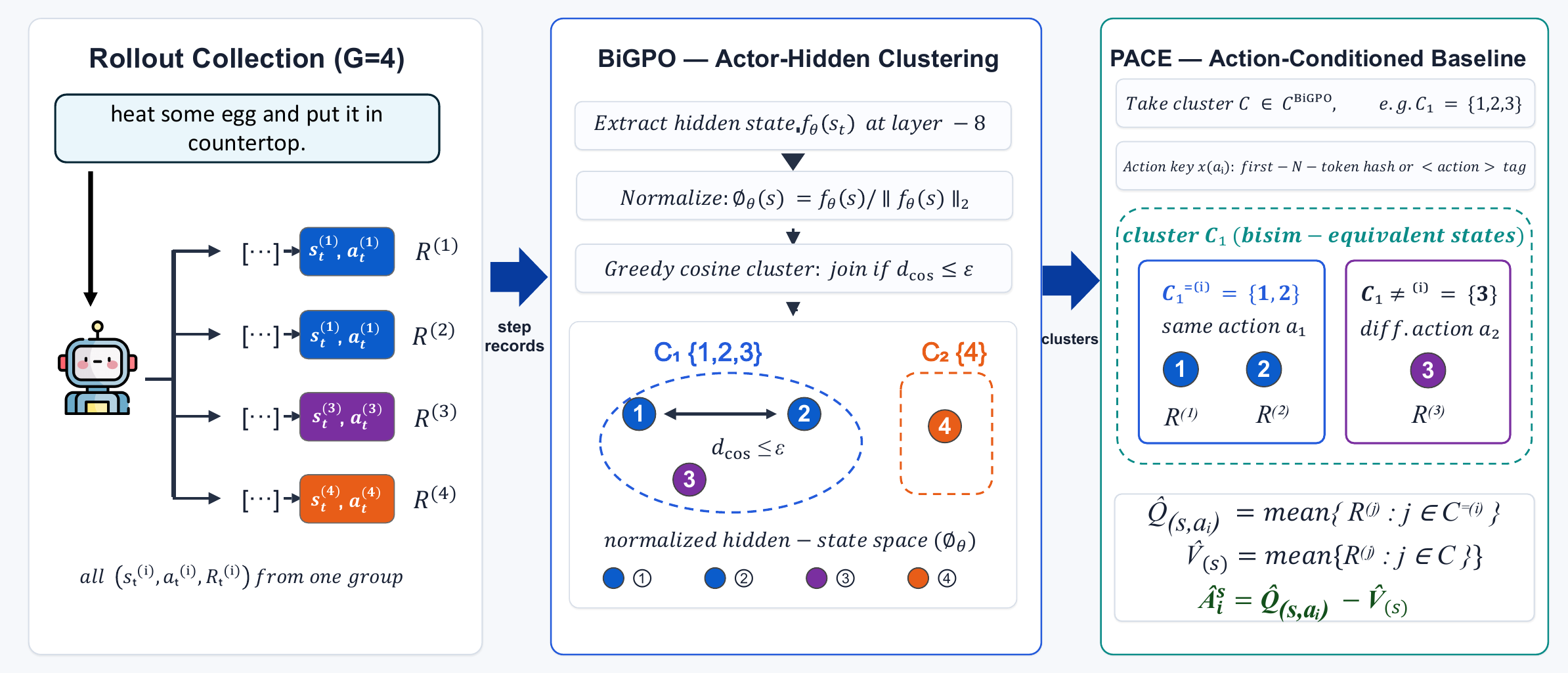}
\captionsetup{font=footnotesize}
\caption{\textbf{Method overview.}
\bipace{} makes two local replacements to the GiGPO step-level
estimator. \textbf{Left:} A prompt group provides step records
($s_t^{(i)}, a_t^{(i)}, R_t^{(i)}$) across $G$ rollouts; chip colors
encode bisimulation class. \textbf{Middle:} \bigpo{} extracts the
actor's normalized hidden state $\phi_\theta(s)$ at a fixed late layer
and clusters by cosine distance, forming behavioral state neighborhoods
$\mathcal{C}_1,\mathcal{C}_2,\ldots$ \textbf{Right:}
PACE splits each cluster by the executed action key and computes a
per-action peer baseline; the Q-style form estimates $\widehat Q(s,a)-\widehat V(s)$.
Only the step-level advantage changes; the PPO objective is unchanged.}
\label{fig:method-overview}
\end{figure*}

\begin{tcolorbox}[
  colback=tblours!55,
  colframe=linkblue!38,
  boxrule=0.55pt,
  arc=1mm,
  left=7pt,
  right=7pt,
  top=6pt,
  bottom=6pt]
\textbf{\bipace{} in one line.}
For each prompt group, \bipace{} first clusters step records by the
actor's own hidden-state geometry (\bigpo{}), then computes the
step-level baseline inside each cluster conditional on the executed
action (PACE). Everything outside the advantage estimator is identical
to \gigpo{}.
\end{tcolorbox}

\Cref{fig:method-overview} traces the full pipeline; the two components
compose rather than simply stack, as PACE's per-action baseline requires
a behaviorally coherent peer pool that \bigpo{} provides.
\Cref{sec:method-bigpo,sec:method-pace} detail each in turn.

\subsection{State-side grouping with \bigpo{}}
\label{sec:method-bigpo}
The key insight behind \bigpo{} is that the actor's own hidden states
already encode behavioral similarity: observations the policy processes
identically cluster tightly in late-layer representation space,
regardless of their surface form.
Exact observation hashing ignores this geometry (treating any
surface-distinct observations as incomparable even when the policy
responds to them identically; \cref{sec:method-mismatch}), and
routinely discards the local signal that step-level RL is designed to
exploit.
\bigpo{} replaces the exact hash with a soft behavioral partition
derived from this representation.

Concretely, let
$f_\theta : \mathcal{S}\to\R^D$ be the function that maps the current
prompt $s_t$ to the actor LLM's hidden state at the final prompt
token, taken at a fixed late intermediate layer chosen once per
backbone (layer $-8$ for Qwen2.5-7B, $-12$ for Qwen2.5-1.5B;
calibration details in \cref{app:hparams,app:scale-calibration}). We use the normalized representation
$\phi_\theta(s) = f_\theta(s)/\|f_\theta(s)\|_2$.
This representation moves with the policy being optimized and requires
no learned critic or auxiliary model; it is obtained via a dedicated
actor forward pass with hidden-state extraction enabled, whose cost is
measured in \cref{sec:computational-budget}.

For each prompt group $p$, \bigpo{} computes the partition
\begin{equation}
\mathcal{C}_p^{\bigpo} \;=\; \mathrm{Cluster}_\varepsilon\big(
\{ \phi_\theta(s^{(i)}) : i \in p \}, \; d_{\cos}\big),
\label{eq:bigpo-clusters}
\end{equation}
where $d_{\cos}(u,v)=1-u^\top v$ and
$\mathrm{Cluster}_\varepsilon$ is a single-pass greedy procedure: each
record joins the nearest existing centroid if its cosine distance is
$\le\varepsilon$, otherwise it seeds a new cluster, and the joined
centroid is updated online; full pseudocode is in \cref{app:greedy}.
The \gigpo{} step advantage in \cref{eq:step-adv} then applies
unchanged, with $\mathcal{C}_p$ replaced by $\mathcal{C}_p^{\bigpo}$. \Cref{sec:theory} bounds the resulting bias: under a
\mico{}-Lipschitz assumption on the embedding, the replacement trades an
$O(\varepsilon)$ bias for many more non-singleton step groups.

\textbf{Relation to prior methods.} Setting $\varepsilon{=}0$ and using a one-hot observation hash as $\phi$ exactly recovers \gigpo{}; replacing the hash with a history-aware signature recovers \hgpo{}. The \gigpo{}/\hgpo{} family thus uses hand-designed discrete fingerprints, while \bigpo{} uses the policy's own continuous fingerprint with coarsening controlled by $\varepsilon$.

\textbf{Embedder design space.} The estimator only requires a fingerprint $\phi$, leaving the embedder open. We examine
two backends along an effort/fidelity axis: \emph{HashNgram}, a
zero-dependency character-$n$-gram hash that groups by lexical
surface form, and \emph{Actor-Hidden}, the policy LLM's own hidden
state. HashNgram is a policy-agnostic control: it isolates whether
the gain comes from the policy-induced geometry or merely from
coarsening the partition; Actor-Hidden
is the main method because it changes with the policy being optimized.

\subsection{Action-conditioned baseline with PACE}
\label{sec:method-pace}
PACE realizes the $Q(s,a){-}V(s)$ decomposition identified in
\cref{sec:method-mismatch}: within each behavioral cluster, the
cluster mean estimates $V(s)$ and a same-action peer mean estimates
$Q(s,a)$, isolating action-specific credit without changing the
state-level baseline.

Concretely, PACE splits each behavioral cluster by the executed
action. For each action $a$, let $\kappa(a)\in\mathbb{Z}$ be an action key. We use two
practical keys: \emph{first-token}, the hash of the first $N{=}8$
response tokens, and \emph{action-tag}, the hash of the body of
\texttt{<action>...</action>}. The latter is semantically cleaner when
the environment exposes parseable actions; the former is cheap and
robust when parsing is unavailable.

Inside a cluster $C\in\mathcal{C}_p^{\bigpo}$, write
$k_i=\kappa(a^{(i)})$, $C^{=}(i)=\{j\in C:k_j=k_i\}$, and
$C^{\neq}(i)=C\setminus C^{=}(i)$. PACE instantiates two
nonparametric estimators:
\begingroup
\small
\setlength{\abovedisplayskip}{3pt}
\setlength{\belowdisplayskip}{3pt}
\setlength{\jot}{1pt}
\begin{align}
\hat A^{\text{diff}}(i)
&= R^{(i)} -
\tfrac{1}{|C^{\neq}(i)|}\!\!\sum_{j\in C^{\neq}(i)}\!\! R^{(j)}
\label{eq:pace-diff}\\
\hat A^{\text{q}}(i)
&= \widehat{Q}(s,a_i)-\widehat{V}(s)
\label{eq:pace-qstyle}\\
\widehat{Q}(s,a_i)
&= \tfrac{1}{|C^{=}(i)|}\sum_{j\in C^{=}(i)} R^{(j)}\\
\widehat{V}(s)
&= \tfrac{1}{|C|}\sum_{j\in C} R^{(j)}.
\end{align}
\endgroup
The diff-peer form compares each action against peers that took a
different action from the same state neighborhood; the Q-style form
directly estimates $\widehat Q(s,a)-\widehat V(s)$.

Fallbacks keep the estimator well-defined. Singleton clusters retain
$\hat A^{\text{step}}=0$ as in \gigpo{}. The diff-peer form falls back
to RLOO leave-one-out when $|C^{\neq}(i)|=0$; the Q-style form falls
back when $|C^{=}(i)|=1$. Empirically, the Q-style branch is
non-degenerate on \textsc{ALFWorld} and gives the best completed
variant in \cref{sec:pace-ablation}; in environments with larger
effective action spaces, the diff-peer form is the safer default.

Combining with the episode term gives the final per-token advantage
\begin{equation}
A^{\bipace}(i) \;=\; A^{\mathrm{ep}}(i) \;+\; w\cdot \hat A^{S}(i),
\label{eq:bipace-final}
\end{equation}
where $\hat A^{S}$ is the selected step-level estimator and $w$ is the
same fixed weight used by \gigpo{}. The PPO surrogate is unchanged.
Implementation details are in \cref{app:impl} and the
computational-budget discussion below.

\section{Experiments}\label{sec:experiments}

We organize the experiments around four questions:
\textbf{RQ1:} Does \bipace{} improve end-task success under the same
rollout budgets and model scales?
\textbf{RQ2:} Does the improvement come with better sample
efficiency?
\textbf{RQ3:} Do the policy-state groups actually increase usable
step-level interactions?
\textbf{RQ4:} Is the action-conditioned PACE estimator necessary on
top of the state-side partition?
We present RQ3 (mechanism) before RQ1 (end-task results) to
establish the underlying diagnostic before interpreting the headline
numbers; RQ2 and RQ4 follow.
The appendix provides extended related work, background, proofs,
reproducibility details, hyperparameters, prompts, calibration scans,
diagnostic tables, failure modes, and full per-seed results.

\subsection{Setup}
\textbf{Environments.}
\textsc{ALFWorld}~\citep{shridhar2020alfworld},
\textsc{WebShop}~\citep{yao2022webshop}, and
\textsc{TextCraft}~\citep{prasad2024adapt}.
\textbf{Models.} Qwen2.5-\{1.5, 7\}B-Instruct~\citep{yang2024qwen25}.
\textbf{Baselines.} GRPO~\citep{shao2024deepseekmath},
PPO with critic~\citep{schulman2017proximal},
\gigpo{}~\citep{feng2026group}, and \hgpo{}~\citep{he2026hierarchy};
prompting rows from \citet{feng2026group} anchor the benchmark scale.
\textbf{Hardware.} 4$\times$H100 (7B); 2--4$\times$H100 (1.5B).
Full settings, seeds, hyperparameters, and prompt templates are in
\cref{app:hparams,app:prompts}.

\subsection{Mechanism: the singleton tax (RQ3)}
\label{sec:mechanism}
\begin{table}[tb]
\centering
\caption{Singleton fraction on \textsc{ALFWorld}/7B.}
\label{tab:singleton}
\footnotesize
\tabstretch
\setlength{\tabcolsep}{6pt}
\begin{tabular}{@{}l ccc@{}}
\toprule
\rowcolor{tblhead}
Method & iter $10$ & iter $75$ & iter $140$ \\
\midrule
\gigpo{} (obs. hash) & $34.2\%$ & $33.1\%$ & $20.7\%$ \\
\rowcolor{tblours}
\bigpo{} (Actor-Hidden) &
$\mathbf{17.3\%}$ & $\mathbf{17.2\%}$ & $\mathbf{14.1\%}$ \\
\bottomrule
\end{tabular}
\end{table}

\gigpo{}'s observation-hash partition leaves many step records in
singleton clusters. These records receive zero step-level advantage by
construction, so the partition directly controls how much of the
step-level gradient can be used. \Cref{tab:singleton} shows the
mechanism-level change: the policy-induced bisimulation partition
starts with a lower singleton fraction than \gigpo{} has even late in
training. Both rows are measured from our training-log diagnostics.
\bigpo{} entries are $5$-step window means centered at the listed
iteration, averaged over the \bipace$_{\text{Q}}$ seeds whose logs cover
that window; the partition depends only on the state-side clustering,
not on the PACE estimator.

\subsection{End-task performance (RQ1)}
\label{sec:main-results}
\Cref{tab:main} is the main result table. The 7B comparison is the
cleanest completed setting: over three seeds, \bipace$_{\text{Q}}$
raises aggregate val\,$@$max (binary val/success-rate, count-weighted by
the validation set) from \gigpo{}'s $90.8$ to $97.1{\pm}0.9$, while
saturating five of six \textsc{ALFWorld} task families at $100\%$
per-subtask val\,$@$max across all seeds.
All three \bipace$_{\text{Q}}$ seeds individually reach the $95\%$ threshold
(at steps $115$--$135$; \cref{app:full-results}); no \gigpo{} seed does so
within the same $150$-step budget. The per-subtask cells are
diagnostic slices: because each slice takes its own best checkpoint,
the aggregate \emph{All} column is the headline metric. The 1.5B
result is a three-seed transfer check; the smaller backbone converges
more slowly, so its row is reported at a later checkpoint within our
training budget and marked with $\star$ (extended budget). Across
three seeds, overall val\,$@$max is $93.5{\pm}1.2$, with three of six
task families saturated at $100\%$ on every seed; per-seed values are
reported in \cref{app:full-results}.

\begin{table*}[t]
\centering
\caption{Task success rate (\%) on \textsc{ALFWorld}
(\textit{valid-seen}, 6 task families) and \textsc{WebShop}
(\emph{Score}\,/\,\emph{Success}). \emph{All} is the count-weighted
overall val\,$@$max on the binary validation success metric. Reference
rows are reproduced from~\citet{feng2026group} and
\citet{he2026hierarchy}. $\star$\,extended training budget (200 steps
vs.\ 150 for 7B); $\ddagger$\,partial run (fewer completed seeds).}
\label{tab:main}
\scriptsize
\tabstretch
\setlength{\tabcolsep}{2.5pt}
\resizebox{\textwidth}{!}{%
\begin{tabular}{@{}ll ccccccc | cc@{}}
\toprule
& & \multicolumn{7}{c|}{\textsc{ALFWorld}} & \multicolumn{2}{c}{\textsc{WebShop}}\\
\cmidrule(lr){3-9}\cmidrule(lr){10-11}
\rowcolor{tblhead}
Type & Method & Pick & Look & Clean & Heat & Cool & Pick2 & All & Score & Succ. \\
\midrule
\tblgroup{11}{Closed-source prompting (no fine-tuning).} \\
Prompting   & GPT-4o          & $75.3$ & $60.8$ & $31.2$ & $56.7$ & $21.6$ & $49.8$ & $48.0$ & $31.8$ & $23.7$ \\
Prompting   & Gemini-2.5-Pro  & $92.8$ & $63.3$ & $62.1$ & $69.0$ & $26.6$ & $58.7$ & $60.3$ & $42.5$ & $35.9$ \\
\midrule
\tblgroup{11}{Qwen2.5-1.5B-Instruct.} \\
Prompting   & Qwen2.5           & $5.9$  & $5.5$  & $3.3$  & $9.7$  & $4.2$  & $0.0$  & $4.1$  & $23.1$ & $5.2$ \\
Prompting   & ReAct             & $17.4$ & $20.5$ & $15.7$ & $6.2$  & $7.7$  & $2.0$  & $12.8$ & $40.1$ & $11.3$ \\
Prompting   & Reflexion         & $35.3$ & $22.2$ & $21.7$ & $13.6$ & $19.4$ & $3.7$  & $21.8$ & $55.8$ & $21.9$ \\
RL training & PPO (w/ critic)   & $64.8{\pm}3.5$ & $40.5{\pm}6.9$ & $57.1{\pm}4.9$ & $60.6{\pm}6.6$ & $46.4{\pm}4.0$ & $47.4{\pm}1.9$ & $54.4{\pm}3.1$ & $73.8{\pm}3.0$ & $51.5{\pm}2.9$ \\
RL training & GRPO              & $85.3{\pm}1.5$ & $53.7{\pm}8.0$ & $84.5{\pm}6.8$ & $78.2{\pm}7.9$ & $59.7{\pm}5.0$ & $53.5{\pm}5.6$ & $75.2{\pm}3.8$ & $75.8{\pm}3.5$ & $56.8{\pm}3.8$ \\
RL training & \gigpo{}    & $94.4{\pm}5.9$ & $67.5{\pm}4.6$ & $94.8{\pm}3.8$ & $94.4{\pm}7.8$ & $79.8{\pm}4.7$ & $76.4{\pm}5.4$ & $86.7{\pm}1.7$ & $83.5{\pm}1.8$\textsuperscript{$\ddagger$} & $67.4{\pm}4.5$\textsuperscript{$\ddagger$} \\
\rowcolor{tblref}
RL training & \hgpo{}   & \multicolumn{6}{c}{not reported} & $92.8{\pm}1.1$ & $85.6{\pm}2.9$ & $71.5{\pm}4.0$ \\
\rowcolor{tblours}
RL training & \textbf{\bipace$_{\text{Q}}$ (ours)}\textsuperscript{$\star$} & $\mathbf{100.0}$ & $\mathbf{97.4}{\pm}3.8$ & $\mathbf{100.0}$ & $\mathbf{100.0}$ & $\mathbf{96.5}{\pm}3.6$ & $\mathbf{92.0}{\pm}7.9$ & $\mathbf{93.5}{\pm}1.2$ & $\mathbf{85.8}{\pm}1.1$ & $\mathbf{71.9}{\pm}2.1$ \\
\midrule
\tblgroup{11}{Qwen2.5-7B-Instruct.} \\
Prompting   & Qwen2.5           & $33.4$ & $21.6$ & $19.3$ & $6.9$  & $2.8$  & $3.2$  & $14.8$ & $26.4$ & $7.8$ \\
Prompting   & ReAct             & $48.5$ & $35.4$ & $34.3$ & $13.2$ & $18.2$ & $17.6$ & $31.2$ & $46.2$ & $19.5$ \\
Prompting   & Reflexion         & $62.0$ & $41.6$ & $44.9$ & $30.9$ & $36.3$ & $23.8$ & $42.7$ & $58.1$ & $28.8$ \\
RL training & PPO (w/ critic)   & $92.3{\pm}4.0$ & $64.0{\pm}8.4$ & $92.5{\pm}2.4$ & $89.5{\pm}7.0$ & $80.3{\pm}2.0$ & $68.8{\pm}8.3$ & $80.4{\pm}2.7$ & $81.4{\pm}3.1$ & $68.7{\pm}5.1$ \\
RL training & GRPO              & $90.8{\pm}5.1$ & $66.1{\pm}6.7$ & $89.3{\pm}5.4$ & $74.7{\pm}6.9$ & $72.5{\pm}5.4$ & $64.7{\pm}7.3$ & $77.6{\pm}5.2$ & $79.3{\pm}2.8$ & $66.1{\pm}3.7$ \\
RL training & \gigpo{}    & $97.7{\pm}1.6$ & $82.7{\pm}7.9$ & $98.8{\pm}1.6$ & $83.7{\pm}7.2$ & $89.3{\pm}8.2$ & $79.2{\pm}6.6$ & $90.8{\pm}1.3$ & $86.2{\pm}2.6$ & $75.2{\pm}3.8$ \\
\rowcolor{tblref}
RL training & \hgpo{}   & \multicolumn{6}{c}{not reported} & $95.4{\pm}0.6$ & $89.0{\pm}1.0$ & $78.5{\pm}1.4$ \\
\rowcolor{tblours}
RL training & \textbf{\bipace$_{\text{Q}}$ (ours)} & $\mathbf{100.0}$ & $\mathbf{100.0}$ & $\mathbf{100.0}$ & $\mathbf{100.0}$ & $\mathbf{95.3}{\pm}3.7$ & $\mathbf{100.0}$ & $\mathbf{97.1}{\pm}0.9$ & $\mathbf{89.6}{\pm}1.3$ & $\mathbf{79.7}{\pm}3.3$ \\
\bottomrule
\end{tabular}%
}
\end{table*}
\paragraph{\textsc{TextCraft} transfer}\label{sec:textcraft}
\textsc{TextCraft} provides an out-of-domain transfer check with
depth-stratified crafting goals (depth-2: short chains;
depth-3: longer subplans requiring intermediate reuse;
depth-4 omitted due to insufficient validation examples).
We apply the same Q-style PACE recipe; the group sizes, training
windows, and action-key choices are listed in \cref{app:hparams}.

\begin{table}[tb]
\centering
\caption{\textsc{TextCraft} validation success rate (\%): peak val
success within the stated window.}
\label{tab:textcraft}
\scriptsize
\tabstretch
\setlength{\tabcolsep}{4pt}
\begin{tabular}{@{}l l c c c@{}}
\toprule
\rowcolor{tblhead}
Type & Method & overall & depth-$2$ & depth-$3$ \\
\midrule
\tblgroup{5}{Qwen2.5-1.5B-Instruct.} \\
Prompting & Qwen2.5 &
$4.2{\pm}1.5$ & $5.7{\pm}2.0$ & $0.0{\pm}0.0$ \\
RL training & GRPO &
$43.8{\pm}2.2$ & $59.5{\pm}1.9$ & $20.6{\pm}4.2$ \\
RL training & \gigpo{} &
$58.3{\pm}7.8$ & $76.7{\pm}5.4$ & $21.8{\pm}9.6$ \\
RL training & \hgpo{} &
$59.4{\pm}8.0$ & $77.8{\pm}8.6$ & $17.6{\pm}4.6$ \\
\rowcolor{tblours}
RL training & \textbf{\bipace$_{\text{Q}}$ (ours)} &
$\mathbf{65.1{\pm}7.0}$ & $\mathbf{84.9{\pm}3.0}$ &
$\mathbf{29.6{\pm}12.0}$ \\
\midrule
\tblgroup{5}{Qwen2.5-7B-Instruct.} \\
Prompting & Qwen2.5 &
$6.8{\pm}2.7$ & $7.9{\pm}2.7$ & $4.0{\pm}2.9$ \\
RL training & GRPO &
$76.5{\pm}3.1$ & $90.3{\pm}0.8$ & $55.5{\pm}10.0$ \\
RL training & \gigpo{} &
$87.0{\pm}2.4$ & $94.0{\pm}1.6$ & $75.0{\pm}3.5$ \\
RL training & \hgpo{} &
$87.5{\pm}2.3$ & $93.6{\pm}1.8$ & $85.0{\pm}6.5$ \\
\rowcolor{tblours}
RL training & \textbf{\bipace$_{\text{Q}}$ (ours)} &
$\mathbf{91.1{\pm}2.4}$ & $\mathbf{95.7{\pm}2.1}$ &
$\mathbf{87.4{\pm}13.4}$ \\
\bottomrule
\end{tabular}
\end{table}

\Cref{tab:textcraft} gives a small out-of-domain check. Prompting alone
does not solve the transfer setting (${\le}7\%$ overall), and
\bipace$_{\text{Q}}$ is the strongest trained row at both scales. Its
largest margins are on depth-$3$ goals ($+7.8$pp over \gigpo{} on
1.5B, $+12.4$pp on 7B), where intermediate states can lead to several
action-conditioned futures. The \hgpo{} rows land in the \gigpo{} band on overall success at both scales;
\bipace$_{\text{Q}}$ outperforms \hgpo{} by $+3.6$pp at 7B ($91.1$ vs.\ $87.5$) and $+5.7$pp at 1.5B ($65.1$ vs.\ $59.4$).
\paragraph{Policy-state interaction diagnostics}
\begin{figure}[tb]
\centering
\includegraphics[width=\linewidth]{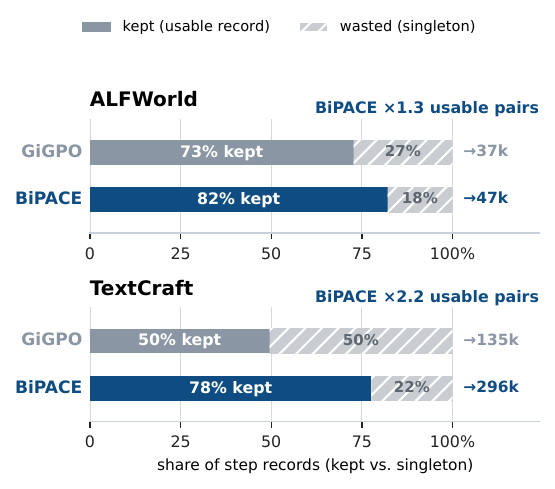}
\captionsetup{font=footnotesize}
\caption{Step-record utilization on \textsc{ALFWorld}/7B and
\textsc{TextCraft}/7B (\emph{kept}: multi-member cluster;
\emph{wasted}: singleton). \bipace{} yields $\times1.3$ usable pairs
on \textsc{ALFWorld} and $\times2.2$ on \textsc{TextCraft};
means over diagnostic seeds, first $130$/$50$ steps.}
\label{fig:bipace-interaction-trends}
\end{figure}
\Cref{fig:bipace-interaction-trends} measures step-record utilization
from paired training-log diagnostics
\citep[funnel diagnostic style of][]{he2026hierarchy}. On
\textsc{ALFWorld}/7B, \bipace{} wastes fewer records to singletons and
clears roughly $\times1.3$ as many matched pairs. On
\textsc{TextCraft}/7B, where exact hashes are sparser, it keeps the
singleton share near $20$--$25\%$ and exposes roughly twice as many
matched pairs. The learned actor representation therefore creates
larger reusable state pools for PACE's action-conditioned baseline.
\subsection{Sample efficiency (RQ2)}
\label{sec:sample-efficiency}
{\sloppy
Sample efficiency is summarized in \cref{fig:teaser}: the bottom row
reports steps-to-threshold (lower is better) for one seed per method
on all three benchmarks, and \bipace$_{\text{Q}}$ reaches every
threshold first. On \textsc{ALFWorld}/1.5B (the smaller backbone is
sub-saturated at the 7B budget, so both methods are run for an
extended budget; \cref{fig:teaser}a), \bipace$_{\text{Q}}$ is
$1.18$--$1.33\times$ faster across the $50$--$80\%$ band and is the
only method to cross $90\%$ within the budget. The same pattern holds on
\textsc{ALFWorld}/7B: \bipace$_{\text{Q}}$ is $1.05$--$1.25\times$
faster across the $60$--$95\%$ band and is the only method to cross
$95\%$ within the $150$-step budget, at step $100$ on the best seed;
all three seeds cross within the budget. On
\textsc{WebShop}/7B and \textsc{TextCraft}/1.5B
(\cref{fig:teaser}b,c), the speedups reach $1.57\times$ and
$2.00\times$, and the top threshold in each panel is reached only by
\bipace$_{\text{Q}}$ within the budget.
\par}
\subsection{Action-side ablation: PACE (RQ4)}
\label{sec:pace-ablation}
\begin{figure}[t]
\centering
\includegraphics[width=0.75\linewidth]{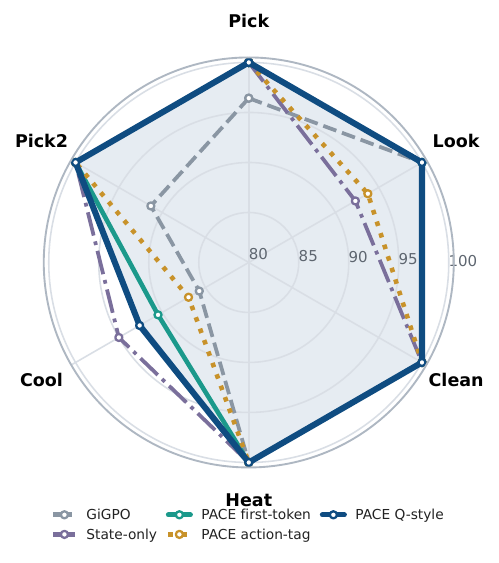}
\captionsetup{font=footnotesize}
\caption{Matched-seed per-task PACE ablation on \textsc{ALFWorld}/7B
(validation peak, \%); radial axis truncated to $80$--$100$.}
\label{fig:pace-radar}
\end{figure}
The state-only variant isolates \bigpo{}'s partition; PACE tests
whether action-conditioning within each cluster adds further gain.
We compare three variants atop Actor-Hidden clustering on
\textsc{ALFWorld}/7B (\cref{fig:pace-radar}; full numeric table in \cref{app:pace-diag}):
\textbf{first-token} (diff-peer mean, \cref{eq:pace-diff}, keyed on
the first $N{=}8$ tokens),
\textbf{action-tag} (same estimator, \texttt{<action>}-body key), and
\textbf{Q-style} (action-tag key with $\widehat{Q}(s,a){-}\widehat{V}(s)$,
\cref{eq:pace-qstyle}).

Any action-conditioning on top of state-only clustering helps
(\cref{fig:pace-radar}; Look, Cool, and Pick2 are the informative
margins, with most other families near ceiling):
first-token PACE reaches $95.8{\pm}0.4\%$ across three seeds,
confirming that \bigpo{}'s behavioral peers support action-specific
credit.
Swapping to the \texttt{<action>}-body key with the same diff-peer
estimator underperforms first-token by $2.8$pp ($93.0{\pm}1.1\%$;
\cref{app:pace-diag}), suggesting that the first ${\sim}8$ tokens
already disambiguate the command on \textsc{ALFWorld} and that
parsing the full action body adds fragility without benefit.
Q-style is the strongest variant ($97.1{\pm}0.9\%$, three seeds):
keeping $\widehat{Q}$ and $\widehat{V}$ in non-overlapping pools
recovers a cleaner estimate of $A(s,a){=}Q(s,a){-}V(s)$ than
diff-peer's mixed-action baseline.
Replacing the Actor-Hidden fingerprint with a policy-agnostic
character-$n$-gram hash, holding the Q-style PACE estimator
fixed, yields $95.4\%$, which is $2.4$pp above the state-only level but
$1.7$pp below Actor-Hidden (\cref{tab:local-checks}), confirming that
the policy-induced geometry contributes gain beyond what lexical
coarsening alone provides.

\begin{table}[tb]
\centering
\caption{Local checks on \textsc{ALFWorld}/7B with Q-style PACE fixed
(embedder: three seeds; radius: seed $0$).}
\label{tab:local-checks}
\scriptsize
\tabstretch
\setlength{\tabcolsep}{4pt}
\begin{tabular}{@{}l c c c c@{}}
\toprule
\rowcolor{tblhead}
Check & setting A & val\,$@$max & setting B & val\,$@$max \\
\midrule
\textsc{Embedder} & HashNgram & $95.4{\pm}1.2$ &
\textbf{Actor-Hidden} & $\mathbf{97.1{\pm}0.9}$ \\
\midrule
\multirow{2}{*}{\textsc{Radius}} &
$\varepsilon{=}0.05$ & $94.5$ &
$\varepsilon{=}0.10$ & $\mathbf{97.7}$ \\
& $\varepsilon{=}0.15$ & $96.1$ &
$\varepsilon{=}0.20$ & $94.5$ \\
\bottomrule
\end{tabular}
\end{table}

The same recipe transfers to Qwen2.5-1.5B ($93.5{\pm}1.2$
vs.~\gigpo{}'s $86.7{\pm}1.7$, $+6.8$pp).

The default clustering radius is also not brittle. With Q-style PACE
fixed on \textsc{ALFWorld}/7B seed $0$, validation success peaks at the
$\varepsilon{=}0.10$ default but remains within a ${\sim}3$pp band over
$\{0.05,0.10,0.15,0.20\}$ (\cref{tab:local-checks}). A row-mix diagnostic
at the default further confirms that the Q-style estimator is not mostly
falling back: $80.2\%$ of rows enter the PACE branch and multi-member
clusters average $2.76$ distinct action keys (\cref{app:pace-diag}).
The action side carries most of the gain.
Once \bigpo{} supplies behaviorally comparable peers, the multi-seed
PACE variants improve over the \gigpo{} reimplementation by
${+}5.0$pp (first-token) and ${+}6.3$pp (Q-style) on the aggregate
val/success-rate metric; state-only clustering alone contributes
${+}2.2$pp. The same pattern transfers to 1.5B:
\bipace$_{\text{Q}}$ gains ${+}6.8$pp over \gigpo{} on overall
val\,$@$max ($3$ seeds; \cref{tab:main}), consistent with the
7B ${+}6.3$pp gap.

\subsection{Computational budget}
\label{sec:computational-budget}
{\sloppy
The only additional measured work over the base \gigpo{}/GRPO loop is
local to the step-level estimator: an actor-hidden forward pass to
obtain policy-state fingerprints, followed by lightweight PACE
grouping and advantage estimation.

\Cref{fig:actor-hidden-overhead} shows that the per-iteration budget is
still dominated by the shared training loop. Rollout generation alone
takes $197.25$s, and the actor update takes $88.68$s, compared with a
$361.27$s pure training step. The BiPACE-specific components total
$40.70$s, or $11.3\%$ of a step. Almost all of this measured addition is
the optional actor-hidden extraction forward ($40.21$s); the PACE
grouping and action-conditioned advantage estimation costs only $0.49$s
per iteration, $0.14\%$ of the step budget. Thus the estimator changes
which step records are compared for credit assignment while leaving the
dominant rollout, probability, and policy-update costs intact.
To be precise: this forward pass adds computation but no extra
environment interactions (rollouts); the two should not be conflated.
The number of records being grouped scales with the rollout budget as
$O(GT)$ for group size $G$ and horizon $T$, so larger budgets increase
the absolute grouping cost even though the measured constant here is
small.
\par}

\begin{figure}[tb]
\centering
\includegraphics[width=\linewidth]{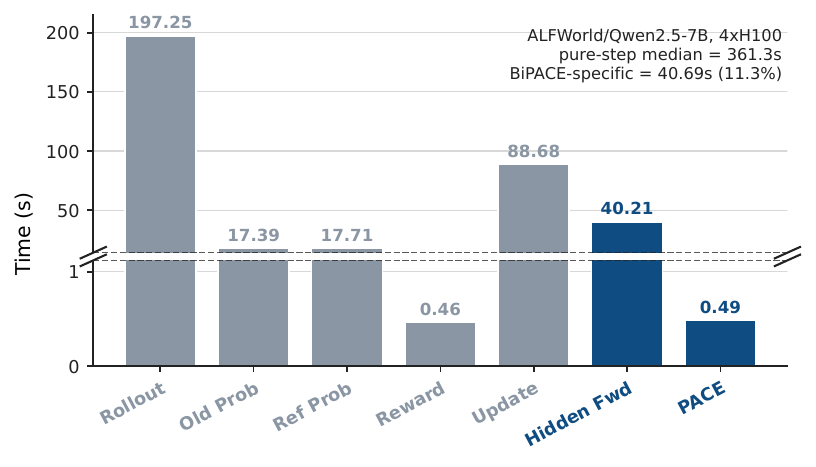}
\captionsetup{font=footnotesize}
\caption{Per-iteration budget on \textsc{ALFWorld}/Qwen2.5-7B
(4$\times$H100); blue bars are BiPACE-specific.}
\label{fig:actor-hidden-overhead}
\end{figure}

\section{Conclusion and Limitations}\label{sec:conclusion}

We have presented \bipace{}, a drop-in advantage estimator for
agentic GRPO that addresses the state-action credit mismatch in
step-level group-based reinforcement learning.
Specifically, \bipace{} introduces two local replacements to the
\gigpo{} estimator: \bigpo{}, which clusters policy-hidden-state
fingerprints as a bisimulation proxy to substantially reduce singleton
groups, and PACE, which adds action-conditioned peer baselines within
each behavioral cluster to recover a local $Q{-}V$ advantage, without
auxiliary models or extra rollouts.
Across three environments and two model scales, \bipace$_{\text{Q}}$
consistently outperforms GRPO and \gigpo{}, and improves over \hgpo{}
on \textsc{ALFWorld}/7B ($+1.7$pp) and \textsc{TextCraft};
all three \textsc{ALFWorld}/7B seeds cross the $95\%$ threshold within
the training budget.

Several directions remain open.
\bipace{} is currently evaluated on text-only environments with
discrete action spaces, and extending to vision-based or
continuous-action settings is a natural next step.
The cosine radius $\varepsilon$ is also fixed via a one-time
calibration scan; allowing it to adapt online as the policy evolves
would sharpen the partition over the course of training.
Richer action representations for PACE beyond the action-tag key
(such as full-text embeddings) could improve counterfactual contrast
in environments with larger action spaces.
Finally, extending the bisimulation-guided grouping to agents that
compress history into a memory module (where direct observation
hashing is intractable) is an interesting avenue for future work.

\clearpage
\bibliographystyle{plainnat}
\bibliography{references}

@article{shao2024deepseekmath,
  title={{DeepSeekMath}: Pushing the limits of mathematical reasoning in open language models},
  author={Shao, Zhihong and Wang, Peiyi and Zhu, Qihao and Xu, Runxin and Song, Junxiao and Bi, Xiao and Zhang, Haowei and Zhang, Mingchuan and Li, YK and Wu, Yang and others},
  journal={arXiv preprint arXiv:2402.03300},
  year={2024}
}

@article{feng2026group,
  title={Group-in-group policy optimization for {LLM} agent training},
  author={Feng, Lang and Xue, Zhenghai and Liu, Tingcong and An, Bo},
  journal={Advances in Neural Information Processing Systems},
  volume={38},
  pages={46375--46408},
  year={2025}
}

@article{he2026hierarchy,
  title={Hierarchy-of-groups policy optimization for long-horizon agentic tasks},
  author={He, Shuo and Feng, Lang and Wei, Qi and Cheng, Xin and Feng, Lei and An, Bo},
  journal={arXiv preprint arXiv:2602.22817},
  year={2026}
}

@article{tan2026hindsight,
  title={Hindsight Credit Assignment for Long-Horizon {LLM} Agents},
  author={Tan, Hui-Ze and Yang, Xiao-Wen and Chen, Hao and Shao, Jie-Jing and Wen, Yi and Shen, Yuteng and Luo, Weihong and Du, Xiku and Guo, Lan-Zhe and Li, Yu-Feng},
  journal={arXiv preprint arXiv:2603.08754},
  year={2026}
}

@article{liu2025agentic,
  title={Agentic reinforcement learning with implicit step rewards},
  author={Liu, Xiaoqian and Wang, Ke and Wu, Yuchuan and Huang, Fei and Li, Yongbin and Zhang, Junge and Jiao, Jianbin},
  journal={arXiv preprint arXiv:2509.19199},
  year={2025}
}

@article{wei2025reinforcing,
  title={Reinforcing multi-turn reasoning in {LLM} agents via turn-level reward design},
  author={Wei, Quan and Zeng, Siliang and Li, Chenliang and Brown, William and Frunza, Oana and Deng, Wei and Schneider, Anderson and Nevmyvaka, Yuriy and Zhao, Yang Katie and Garcia, Alfredo and others},
  journal={arXiv preprint arXiv:2505.11821},
  year={2025}
}

@article{li2026counterfactual,
  title={Counterfactual Credit Policy Optimization for Multi-Agent Collaboration},
  author={Li, Zhongyi and Tian, Wan and Ban, Yikun and Chen, Jinju and Zhang, Huiming and Liu, Yang and Zhuang, Fuzhen},
  journal={arXiv preprint arXiv:2603.21563},
  year={2026}
}

@article{pan2026evpo,
  title={{EVPO}: Explained Variance Policy Optimization for Adaptive Critic Utilization in {LLM} Post-Training},
  author={Pan, Chengjun and Liu, Shichun and Lin, Jiahang and Zhu, Dingwei and Zhang, Jiazheng and Dou, Shihan and Gao, Songyang and Han, Zhenhua and Wang, Binghai and Zheng, Rui and others},
  journal={arXiv preprint arXiv:2604.19485},
  year={2026}
}

@article{yu2026dapo,
  title={{DAPO}: An open-source {LLM} reinforcement learning system at scale},
  author={Yu, Qiying and Zhang, Zheng and Zhu, Ruofei and Yuan, Yufeng and Zuo, Xiaochen and Yue, Yu and Dai, Weinan and Fan, Tiantian and Liu, Gaohong and Liu, Lingjun and others},
  journal={Advances in Neural Information Processing Systems},
  volume={38},
  pages={113222--113244},
  year={2025}
}

@article{givan2003equivalence,
  title={Equivalence notions and model minimization in Markov decision processes},
  author={Givan, Robert and Dean, Thomas and Greig, Matthew},
  journal={Artificial intelligence},
  volume={147},
  number={1-2},
  pages={163--223},
  year={2003},
  publisher={Elsevier}
}

@inproceedings{ferns2004metrics,
  title={Metrics for Finite Markov Decision Processes},
  author={Ferns, Norm and Panangaden, Prakash and Precup, Doina},
  booktitle={Proceedings of the 20th Conference on Uncertainty in Artificial Intelligence ({UAI})},
  pages={162--169},
  year={2004}
}

@article{castro2021mico,
  title={{MICo}: Improved representations via sampling-based state similarity for {Markov} decision processes},
  author={Castro, Pablo Samuel and Kastner, Tyler and Panangaden, Prakash and Rowland, Mark},
  journal={Advances in Neural Information Processing Systems},
  volume={34},
  pages={30113--30126},
  year={2021}
}

@article{zhang2020learning,
  title={Learning invariant representations for reinforcement learning without reconstruction},
  author={Zhang, Amy and McAllister, Rowan and Calandra, Roberto and Gal, Yarin and Levine, Sergey},
  journal={arXiv preprint arXiv:2006.10742},
  year={2020}
}

@inproceedings{foerster2018counterfactual,
  title={Counterfactual multi-agent policy gradients},
  author={Foerster, Jakob and Farquhar, Gregory and Afouras, Triantafyllos and Nardelli, Nantas and Whiteson, Shimon},
  booktitle={Proceedings of the AAAI conference on artificial intelligence},
  volume={32},
  year={2018}
}

@article{schulman2017proximal,
  title={Proximal policy optimization algorithms},
  author={Schulman, John and Wolski, Filip and Dhariwal, Prafulla and Radford, Alec and Klimov, Oleg},
  journal={arXiv preprint arXiv:1707.06347},
  year={2017}
}

@article{shridhar2020alfworld,
  title={{ALFWorld}: Aligning text and embodied environments for interactive learning},
  author={Shridhar, Mohit and Yuan, Xingdi and C{\^o}t{\'e}, Marc-Alexandre and Bisk, Yonatan and Trischler, Adam and Hausknecht, Matthew},
  journal={arXiv preprint arXiv:2010.03768},
  year={2020}
}

@article{yao2022webshop,
  title={{WebShop}: Towards scalable real-world web interaction with grounded language agents},
  author={Yao, Shunyu and Chen, Howard and Yang, John and Narasimhan, Karthik},
  journal={Advances in Neural Information Processing Systems},
  volume={35},
  pages={20744--20757},
  year={2022}
}

@inproceedings{prasad2024adapt,
  title={{ADaPT}: As-needed decomposition and planning with language models},
  author={Prasad, Archiki and Koller, Alexander and Hartmann, Mareike and Clark, Peter and Sabharwal, Ashish and Bansal, Mohit and Khot, Tushar},
  booktitle={Findings of the Association for Computational Linguistics: NAACL 2024},
  pages={4226--4252},
  year={2024}
}

@misc{SchraderSokoban2018,
  author = {Schrader, Max-Philipp B.},
  title = {gym-sokoban},
  year = {2018},
  publisher = {GitHub},
  journal = {GitHub repository},
  howpublished = {\url{https://github.com/mpSchrader/gym-sokoban}}
}

@article{yang2024qwen25,
  title={Qwen2.5 technical report},
  author={Yang, An and Yang, Baosong and Zhang, Beichen and Hui, Binyuan and Zheng, Bo and Yu, Bowen and Li, Chengyuan and Liu, Dayiheng and Huang, Fei and Wei, Haoran and others},
  journal={arXiv preprint arXiv:2412.15115},
  year={2024}
}

@inproceedings{kool2019buy,
  title={Buy 4 {REINFORCE} samples, get a baseline for free!},
  author={Kool, Wouter and van Hoof, Herke and Welling, Max},
  booktitle={ICLR Workshop on Deep RL Meets Structured Prediction},
  year={2019}
}

@inproceedings{ahmadian2024back,
  title={Back to basics: Revisiting {REINFORCE}-style optimization for learning from human feedback in {LLMs}},
  author={Ahmadian, Arash and Cremer, Chris and Gall{\'e}, Matthias and Fadaee, Marzieh and Kreutzer, Julia and Pietquin, Olivier and {\"U}st{\"u}n, Ahmet and Hooker, Sara},
  booktitle={Proceedings of the 62nd Annual Meeting of the Association for Computational Linguistics},
  year={2024}
}

@inproceedings{gu2017qprop,
  title={{Q-Prop}: Sample-efficient policy gradient with an off-policy critic},
  author={Gu, Shixiang and Lillicrap, Timothy and Ghahramani, Zoubin and Turner, Richard E and Levine, Sergey},
  booktitle={International Conference on Learning Representations},
  year={2017}
}

@inproceedings{tucker2018mirage,
  title={The mirage of action-dependent baselines in reinforcement learning},
  author={Tucker, George and Bhupatiraju, Surya and Gu, Shixiang and Turner, Richard E and Ghahramani, Zoubin and Levine, Sergey},
  booktitle={International Conference on Machine Learning},
  pages={5015--5024},
  year={2018}
}

@inproceedings{li2026salt,
  title={SALT: Step-level advantage assignment for long-horizon agents via trajectory graph},
  author={Li, Jiazheng and Wang, Yawei and Yan, Qiaojing and Tian, Yijun and Xu, Zhichao and Song, Huan and Xu, Panpan and Cheong, Lin Lee},
  booktitle={Findings of the Association for Computational Linguistics: EACL 2026},
  pages={4709--4725},
  year={2026}
}

@article{han20263spo,
  title={3SPO: State-Score-Supervised Policy Optimization for LLM Agents},
  author={Han, Yu and Li, Kailing and Jiao, Yang and Dai, Yulin and Fu, Yuqian and Zhuo, Linhai and Qian, Tianwen},
  journal={arXiv preprint arXiv:2606.09961},
  year={2026}
}

@article{wang2025text2grad,
  title={Text2Grad: Reinforcement Learning from Natural Language Feedback},
  author={Wang, Hanyang and Wang, Lu and Zhang, Chaoyun and Mao, Tianjun and Qin, Si and Lin, Qingwei and Rajmohan, Saravan and Zhang, Dongmei},
  journal={arXiv preprint arXiv:2505.22338},
  year={2025}
}

\appendix
\crefalias{section}{appendix}   

\numberwithin{equation}{section}
\numberwithin{figure}{section}
\numberwithin{table}{section}

\section{Extended Related Work}\label{app:extended-related}

\paragraph{Group-relative RL for LLM agents}
GRPO~\citep{shao2024deepseekmath} drops the critic by baselining
against in-group sampled returns. \gigpo{}~\citep{feng2026group} adds a
step-level term keyed on exact observation hashes, and
\hgpo{}~\citep{he2026hierarchy} augments that key with history length.
DAPO~\citep{yu2026dapo} and related work tune optimization or reward
shaping, but still leave the state equivalence relation discrete, the
part \bipace{} replaces. Recent step-level alternatives such as
SALT~\citep{li2026salt} and 3SPO~\citep{han20263spo} are complementary:
they modify the learning signal or supervision, whereas \bipace{}
changes the comparison set while reusing sparse returns.

\paragraph{Bisimulation and representation metrics}
On the state side, \bipace{} borrows the value-preserving
bisimulation view: the metric of \citet{ferns2004metrics}, its
sample-based \mico{} approximation~\citep{castro2021mico}, and its use
in shaping deep-RL representations~\citep{zhang2020learning}. Unlike
auxiliary representation-learning approaches, \bigpo{} uses the
actor's own hidden states as the behavioral fingerprint, so the
partition moves with the policy being optimized.

\paragraph{Action-conditioned baselines}
On the action side, PACE recovers nonparametrically the
action-conditioned baseline that COMA~\citep{foerster2018counterfactual}
and CCPO~\citep{li2026counterfactual} learn with critics. Classical
action-dependent baselines~\citep{gu2017qprop,tucker2018mirage}
motivate this direction but are generally biased without correction,
so we treat PACE as an estimator design validated by ablation rather
than an unbiased-gradient claim. EVPO~\citep{pan2026evpo} switches
between critic and group-mean baselines, a choice complementary to
\bipace{}'s control of the partition that defines the group mean.

\section{Background Details}\label{app:preliminaries}

\subsection{Multi-turn LLM agent decision process}
We consider an episodic POMDP $\mathcal{M} = (\mathcal{S},
\mathcal{A}, P, r, \gamma, T)$ in which an LLM policy
$\pi_\theta(a_t \mid s_t)$ produces a textual action $a_t$ given an
observation $s_t$, and the environment returns a next observation
$s_{t+1}$, a (typically sparse) reward $r_t$, and a done flag.
Throughout, $s_t$ denotes the agent-visible observation, which plays
the role of state for estimation purposes. We follow
\citet{feng2026group} in a \emph{step-independent} input formulation:
each step's prompt is constructed from the current observation and a
(possibly summarized) history, enabling horizons of $50{+}$ steps.

For each prompt $p$ we sample $G$ trajectories
$\tau^{(1)}, \ldots, \tau^{(G)}$ of (possibly varying) lengths
$T^{(g)}$. In the terminal-only reward setting we focus on, the
trajectory return is $R(\tau) = r_{T-1}$
(e.g., $1$ on success and $0$ otherwise on \textsc{ALFWorld}).

\paragraph{GRPO and \gigpo{}}
The GRPO episode-level advantage and the \gigpo{} step-level advantage
are defined in the main paper (\cref{eq:episode-adv,eq:step-adv} and the surrounding
discussion in \cref{sec:method}). The two failure modes of
exact observation hashing that motivate \bigpo{} (singleton clusters
and paraphrase splitting) are analyzed in \cref{sec:method-mismatch}.

\subsection{Bisimulation and the \mico{} metric}
\label{app:prelim-bisim}
A binary relation $E$ on states is a \emph{bisimulation} if $sEs'$
implies $r(s,a)=r(s',a)$ and
$\sum_{s_+\in[s_+]_E} P(s_+\mid s,a) =
\sum_{s_+\in[s_+]_E} P(s_+\mid s',a)$
for all $a$~\citep{givan2003equivalence}. Aggregating bisimilar states
preserves $V^\pi$ (and indeed $Q^\pi$ for every policy); bisimulation
is thus a sufficient, if conservative, equivalence for
value-preserving state aggregation.

\citet{castro2021mico} introduce the \emph{\mico{} distance} $\dpi$, a
tractable sample-based approximation satisfying the value-difference
bound
\begin{equation}
\bigl| V^\pi(s) - V^\pi(s') \bigr| \;\le\; \dpi(s, s').
\label{eq:app-mico-bound}
\end{equation}
\bigpo{} uses an empirical proxy for $\dpi$ derived from the policy's
own hidden representation; \cref{prop:bigpo-bias-var} formalizes the
resulting bias bound on the step-level advantage estimator.

\section{Bias--Variance Analysis Details}\label{sec:theory}

We analyze the step-level baseline estimator that \bigpo{} and
\gigpo{} share, and isolate the role of the partition. Throughout,
fix a prompt-group $p$ and let $\{(s^{(i)}, a^{(i)}, R^{(i)})\}_{i=1}^N$
denote its step records, with $R^{(i)}$ the discounted return-to-go
used by the step estimator in \cref{sec:method-setup}. Let $\mathcal{C}$ be a partition
of $\{1,\dots,N\}$ defined by some equivalence relation $\sim$, and
let
\begin{equation}
\hat A^{\sim}(i) \;=\; R^{(i)} - \frac{1}{|C(i)|}\sum_{j \in C(i)} R^{(j)},
\quad C(i) \in \mathcal{C}, \; i \in C(i),
\label{eq:Asim}
\end{equation}
denote the within-cluster mean-baseline estimator (mean-norm form;
the std-norm form admits a parallel argument).

\subsection{Bias and the \mico{} bound}
\begin{proposition}[\bigpo{} step-baseline bias]
\label{prop:bigpo-bias-var}
Let $V^\pi(s) := \E[R \mid s, \pi]$ and assume $V^\pi$ is
$L$-Lipschitz in the \mico{} metric $\dpi$. Let $\sim_\varepsilon$
denote any partition of the step records whose clusters have
$\dpi$-diameter at most $2\varepsilon$, the regime greedy clustering
with admission threshold $\varepsilon$ (\cref{alg:greedy}) targets by
bounding each member's distance to the cluster centroid at admission.
Then for every step $i$,
\begin{equation}
\Big| \E\!\left[\hat A^{\sim_\varepsilon}(i) \;-\; A^\star(i) \right] \Big|
\;\le\; 2 L \varepsilon,
\label{eq:bigpo-bias}
\end{equation}
where $A^\star(i) := R^{(i)} - V^\pi(s^{(i)})$ is the ideal
state-conditional advantage.
\end{proposition}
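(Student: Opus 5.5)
The plan is to write the estimation error as a difference of value functions and then use the Lipschitz assumption together with the cluster diameter. Subtracting the definitions in \cref{eq:Asim} and of $A^\star(i)$, the $R^{(i)}$ terms cancel and we get
\begin{equation*}
\hat A^{\sim_\varepsilon}(i) - A^\star(i) \;=\; V^\pi(s^{(i)}) - \frac{1}{|C(i)|}\sum_{j\in C(i)} R^{(j)} .
\end{equation*}
Taking expectations conditional on the realized states $\{s^{(j)}\}_{j\in C(i)}$ and the realized partition, the defining property $V^\pi(s)=\E[R\mid s,\pi]$ turns each $R^{(j)}$ into $V^\pi(s^{(j)})$. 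The conditional bias is then the deterministic average
\begin{equation*}
\frac{1}{|C(i)|}\sum_{j\in C(i)} \bigl(V^\pi(s^{(i)}) - V^\pi(s^{(j)})\bigr).
\end{equation*}
Next I bound each summand. By the $L$-Lipschitz hypothesis (a sharpened form of the \mico{} value bound \cref{eq:app-mico-bound}), $|V^\pi(s^{(i)})-V^\pi(s^{(j)})|\le L\,\dpi(s^{(i)},s^{(j)})$. Since $i,j$ lie in a common cluster of $\dpi$-diameter at most $2\varepsilon$, this is at most $2L\varepsilon$. The $j=i$ term vanishes, so averaging gives the tighter bound $2L\varepsilon(|C(i)|-1)/|C(i)|\le 2L\varepsilon$. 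Finally, the tower property and Jensen, $|\E X|\le \E|\E[X\mid\cdot]|$, lift the conditional bound to the unconditional statement \cref{eq:bigpo-bias}. The singleton case is consistent with this: the estimator is $0$ and the bias is exactly $0$.

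The main obstacle is justifying the conditioning step, $\E[R^{(j)}\mid s^{(j)},\text{cluster membership}]=V^\pi(s^{(j)})$. The partition is built from the embeddings of all records in the prompt group, and the cluster membership may depend on the other records. I would handle this by conditioning on the full set of states $\{s^{(k)}\}_{k\in p}$. The partition is then a measurable function of these states alone: the embeddings $\phi_\theta$ are computed from prompts, not returns. Under the Markov property, each return-to-go $R^{(j)}$ depends on the past only through $s^{(j)}$, so the conditional mean is still $V^\pi(s^{(j)})$.

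A subtlety is that records from the same trajectory share a future, which could correlate $R^{(j)}$ with later states in the conditioning set. Conditioning on later states of the same trajectory could in principle shift the mean. To deal with this, I would state the assumption explicitly: the partition depends only on the states, and $R^{(j)}$ is conditionally mean $V^\pi(s^{(j)})$ given those states. This mirrors the implicit exchangeability assumption of \gigpo{}. I would also note that the diameter condition is taken as a hypothesis on $\sim_\varepsilon$ rather than derived from \cref{alg:greedy}.
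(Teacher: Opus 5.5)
Your proposal is correct and follows the paper's proof: the $R^{(i)}$ terms cancel to leave the bias $V^\pi(s^{(i)}) - \frac{1}{|C(i)|}\sum_{j\in C(i)} V^\pi(s^{(j)})$, each pair is bounded by $2L\varepsilon$ via Lipschitzness and the diameter, and averaging finishes the argument. Your extra care about conditioning is the only real addition. You note that later states of the same trajectory can shift $\E[R^{(j)}\mid\cdot]$, and you resolve this by an explicit assumption rather than the Markov property; this makes explicit what the paper uses silently when it writes the bias identity, and the $(|C(i)|-1)/|C(i)|$ factor is a harmless sharpening.
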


\begin{proof}[Proof sketch]
The estimator's bias is
$\E[\hat A^{\sim_\varepsilon}(i)] - A^\star(i) = V^\pi(s^{(i)}) -
\frac{1}{|C(i)|}\sum_{j\in C(i)} V^\pi(s^{(j)})$.
By the diameter assumption, any two members of the same cluster
satisfy $\dpi(s^{(i)}, s^{(j)}) \le 2\varepsilon$; by Lipschitz
continuity each term $|V^\pi(s^{(i)}) - V^\pi(s^{(j)})|$ is then
bounded by $2L\varepsilon$, and so is the cluster average, giving the
$2L\varepsilon$ bound. Full proof in \cref{app:proofs}.
\end{proof}

\begin{remark}[Empirical proxy]
\bigpo{} clusters not by $\dpi$ directly but by cosine distance on
$\phi_\theta$. \citet{castro2021mico} show that representations
trained to respect $\dpi$ admit a Lipschitz embedding; we treat the
actor's hidden state as an empirical surrogate, so \cref{eq:bigpo-bias}
holds under a Lipschitz-embedding assumption standard for
representation-based aggregation. The mechanism diagnostics in
\cref{sec:experiments} measure exactly the quantities the bound
predicts should improve (singleton rate and reuse geometry) and
confirm the effect.
\end{remark}

\subsection{\texorpdfstring{\gigpo{} as the degenerate
$\varepsilon{=}0$ limit}{GiGPO as the degenerate epsilon=0 limit}}
\begin{corollary}[Singleton signal collapse]
\label{cor:gigpo}
Setting $\varepsilon = 0$ with the identity embedder (i.e.\ \gigpo{})
yields zero aggregation bias in \cref{eq:bigpo-bias}, but every
singleton cluster
produces a degenerate estimate:
\begin{equation}
\hat A^{\sim_0}(i) \;=\; 0 \quad\text{deterministically whenever }
|C(i)| = 1 .
\end{equation}
Hence a fraction $p_1$ of step records (those landing in singleton
clusters) carries no step-level gradient (the episode-level term
$A^{\mathrm{ep}}$ is unaffected), and the usable step-level signal
vanishes as $p_1 \to 1$.
\end{corollary}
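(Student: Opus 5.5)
The plan has three parts, one for each claim in \cref{cor:gigpo}: zero aggregation bias, deterministic vanishing on singletons, and the collapse of usable signal as $p_1\to 1$. Each follows from the explicit form of \cref{eq:Asim} together with \cref{prop:bigpo-bias-var}, so no new probabilistic machinery is needed.

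\textbf{Zero aggregation bias.} With $\varepsilon=0$ and the identity (exact-hash) embedder, every cluster $C(i)$ contains only records whose observations are identical to $s^{(i)}$. Such a cluster has $\dpi$-diameter $0$, since $\dpi(s,s)=0$ and the same observation yields the same $V^\pi$ by definition. I would invoke \cref{prop:bigpo-bias-var} with $\varepsilon=0$, which gives the bound $2L\cdot 0=0$. Alternatively, I can read it off the bias identity in that proof: $V^\pi(s^{(i)})-\frac{1}{|C(i)|}\sum_{j\in C(i)}V^\pi(s^{(j)})=0$, because every summand equals $V^\pi(s^{(i)})$. The direct identity is cleaner, because it does not even require the Lipschitz assumption.

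\textbf{Singleton degeneracy.} If $|C(i)|=1$, then $C(i)=\{i\}$. Substituting into \cref{eq:Asim} gives $\hat A^{\sim_0}(i)=R^{(i)}-R^{(i)}=0$ for every realization, hence deterministically. For the std-normalized form \cref{eq:step-adv}, I would note that $\sigma_C=0$ on a singleton, so the numerator vanishes and the result is $0/\delta=0$. The claim therefore holds for both forms.

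\textbf{Signal fraction.} In the final advantage \cref{eq:bipace-final}, the step term $w\cdot\hat A^{S}(i)$ is identically zero on singleton records, so only $A^{\mathrm{ep}}$ survives there. $A^{\mathrm{ep}}$ depends only on trajectory returns via \cref{eq:episode-adv}, not on the partition, and is therefore unaffected. I would quantify usable step-level signal as the sum over non-singleton records $\sum_{i:|C(i)|>1}|\hat A(i)|$, or as the corresponding gradient contribution. This sum ranges over at most $(1-p_1)N$ records, and each term is bounded by $\max_j|R^{(j)}|$ (or by $1/\delta$-scaled constants in the normalized form). The total is therefore at most $(1-p_1)N\cdot\text{const}$, which tends to $0$ as $p_1\to1$.

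The main obstacle is making "usable step-level signal" precise enough to yield a rigorous limit. I would fix the definition as the per-record average magnitude of the step term, which gives a clean bound of the form $(1-p_1)\cdot\text{const}$.
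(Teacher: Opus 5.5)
Your proposal is correct and follows essentially the same route as the paper: on a singleton $C(i)=\{i\}$ the baseline is $R^{(i)}$ itself, so $\hat A^{\sim_0}(i)=R^{(i)}-R^{(i)}=0$, the PPO surrogate gradient vanishes there, and exactly the fraction $p_1$ of step records is discarded; your explicit std-normalized case and your $(1-p_1)\cdot\mathrm{const}$ bound on the per-record step term make precise what the paper only asserts. One caution on the zero-bias claim: use your direct identity (every $V^\pi(s^{(j)})$ in an exact-hash cluster equals $V^\pi(s^{(i)})$), not \cref{prop:bigpo-bias-var} with diameter $0$, because the \mico{} distance is only a diffuse metric, so $\dpi(s,s)$ can be strictly positive under stochastic transitions and the diameter-$0$ hypothesis can fail even for identical observations.
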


\begin{proof}
A singleton cluster has $\hat A(i) = R^{(i)} - R^{(i)} = 0$
deterministically, so the PPO surrogate gradient
$\nabla_\theta \log \pi_\theta(a^{(i)}\mid s^{(i)})\,\hat A^{\sim_0}(i)$
is identically zero on every singleton, regardless of the realized
return: the estimate carries no information about the action taken.
The discarded mass is exactly $p_1$. See \cref{app:proofs} for the
formal restatement.
\end{proof}

\begin{remark}[\gigpo{}'s singleton tax in numbers]
Our \gigpo{} reproduction diagnostics on \textsc{ALFWorld} show
singleton fractions of $34.2\%$, $33.1\%$, and $20.7\%$ at iterations
$10$, $75$, and $140$
respectively. These are singleton-\emph{cluster} fractions (the share
of clusters of size one), which upper-bound the record-level fraction
$p_1$ of \cref{cor:gigpo}; both quantities vanish together, and
\cref{cor:gigpo} translates a high singleton rate directly into
wasted step-level signal. Our \cref{sec:experiments} reports the
corresponding \bigpo{} numbers, which stay below $20\%$ at every
measured point of training (\cref{tab:singleton}).
\end{remark}

\subsection{Variance of the Q-style estimator}
\label{sec:theory-qstyle}
The PACE Q-style form $\hat A^{\text{q}}(i) =
\widehat Q(s,a_i)-\widehat V(s)$ in \cref{eq:pace-qstyle} is
unbiased for $A^\pi(s,a)$ under \emph{exact} bisimulation
($\varepsilon{=}0$): both terms are within-class sample means with
$\E[\widehat Q(s,a)] = Q^\pi(s,a)$ and $\E[\widehat V(s)] = V^\pi(s)$.
For $\varepsilon > 0$, $\widehat V$ inherits the $O(\varepsilon)$ bias
of \cref{prop:bigpo-bias-var}; for $\widehat Q$ we additionally assume
$Q^\pi(\cdot,a)$ is Lipschitz in $\dpi$ for each action $a$ (a
strictly stronger requirement than the value-difference bound of
\cref{eq:app-mico-bound}), under which $\hat A^{\text{q}}$ is
$O(\varepsilon)$-biased for $A^\pi$. Its variance decomposes as
$\Var[\hat A^{\text{q}}] = \Var[\widehat Q] + \Var[\widehat V] -
2\,\mathrm{Cov}[\widehat Q,\widehat V]$, with $\widehat V$ pooling
the larger set $|C|$ and therefore reducing the second term. The
estimator is well-defined when $|C^{=}(i)| \ge 2$; otherwise we fall
back to RLOO leave-one-out. The diagnostics in
\cref{sec:pace-ablation} report the empirical fraction of rows that
enter each branch and confirm the same-action pool is non-degenerate
on the benchmarks tested.

\subsection{\texorpdfstring{Choosing $\varepsilon$}{Choosing epsilon}}
\Cref{prop:bigpo-bias-var} suggests a clear principle: $\varepsilon$
should be small enough for the Lipschitz bias to be dominated by
within-trajectory return noise, but large enough to defeat the
singleton tax of \cref{cor:gigpo}. We provide an adaptive heuristic
in \cref{app:eps-tuning} that targets a median cluster size of $4$--$8$
by binary search on $\varepsilon$ over the first training step;
empirically a single static $\varepsilon = 0.10$ works across all 7B
environments tested, and on \textsc{ALFWorld} end-task success is
unimodal in $\varepsilon$ with a flat ${\sim}3$pp plateau around the
default (\cref{tab:local-checks}, \textsc{Radius} check). Smaller backbones have different representation
geometry, so we calibrate $(\ell,\varepsilon)$ once per backbone
before training (\cref{app:scale-calibration}).

\section{Greedy Clustering Procedure}\label{app:greedy}

\Cref{alg:greedy} gives the single-pass greedy clustering used by
\bigpo{} (\cref{eq:bigpo-clusters}). The procedure runs once per
prompt-group in $O(N_p K_p)$ time, where $N_p$ is the group's
step-record count and $K_p$ is its cluster count.

\begin{algorithm}[tb]
\caption{Greedy online cosine clustering used by \bigpo{}.}
\label{alg:greedy}
\begin{algorithmic}[1]
\Require unit vectors $\{x_i\}_{i=1}^n \subset \R^D$,
  threshold $\varepsilon \in [0,2]$
\State $\mathcal{K}\gets[\,]$ \Comment{cluster centroids}
\State $\mathcal{M}\gets[\,]$ \Comment{cluster members}
\For{$i=1,\ldots,n$}
  \If{$\mathcal{K}=\emptyset$}
    \State append $x_i$ to $\mathcal{K}$; append $\{i\}$ to $\mathcal{M}$
  \Else
    \State $k \gets \arg\max_j\, x_i^\top \mathcal{K}_j$
    \If{$1 - x_i^\top \mathcal{K}_k \le \varepsilon$}
      \State $\mathcal{M}_k \gets \mathcal{M}_k \cup \{i\}$
      \State $\mathcal{K}_k \gets \mathrm{normalize}\!\left(
        \mathcal{K}_k + \tfrac{1}{|\mathcal{M}_k|}(x_i - \mathcal{K}_k)
      \right)$
    \Else
      \State append $x_i$ to $\mathcal{K}$; append $\{i\}$ to $\mathcal{M}$
    \EndIf
  \EndIf
\EndFor
\State \Return $\mathcal{M}$
\end{algorithmic}
\end{algorithm}

\noindent
Records are processed in rollout-major order ($g=1,\dots,G$, then
$t=1,\dots,T^{(g)}$); results are stable to within-group shuffling.

\section{Proofs}\label{app:proofs}

\subsection{\texorpdfstring{Proof of \cref{prop:bigpo-bias-var}}{Proof of Proposition 1}}

\begin{proof}
Let $C := C(i)$ and $\bar V_C := \frac{1}{|C|}\sum_{j\in C}
V^\pi(s^{(j)})$. By the diameter assumption of
\cref{prop:bigpo-bias-var}, any two members $i,j \in C$ satisfy
$\dpi(s^{(i)}, s^{(j)}) \le 2\varepsilon$. (Greedy online clustering,
\cref{alg:greedy}, joins a point to a cluster only when its distance
to the current centroid is $\le \varepsilon$; we state the proposition
for diameter-bounded partitions so the guarantee is independent of
subsequent centroid updates.) By Lipschitzness of
$V^\pi$ in $\dpi$, for every $j \in C$,
\[
\bigl| V^\pi(s^{(i)}) - V^\pi(s^{(j)}) \bigr|
  \;\le\; L\,\dpi(s^{(i)}, s^{(j)})
  \;\le\; 2L\,\varepsilon.
\]
Averaging over $j \in C$ gives
$| V^\pi(s^{(i)}) - \bar V_C | \le 2L\varepsilon$.
Substituting into the bias decomposition
\[
\E\!\left[\hat A^{\sim_\varepsilon}(i)\right] - A^\star(i)
  \;=\; V^\pi(s^{(i)}) - \bar V_C
\]
yields $|\E[\hat A] - A^\star| \le 2L\varepsilon$, which is the claim.
\end{proof}

\subsection{\texorpdfstring{Proof of \cref{cor:gigpo}}{Proof of Corollary 1}}

\begin{proof}
Restricted to $\{i : |C(i)| = 1\}$ we have
$\hat A^{\sim_0}(i) = R^{(i)} - R^{(i)} = 0$ deterministically.
The PPO surrogate gradient
$\nabla_\theta \log \pi_\theta(a^{(i)}\mid s^{(i)})\,\hat A^{\sim_0}(i)$
is therefore identically zero on every singleton, regardless of the
realized return; the singleton carries no learning signal. Summing
over step records, a fraction $p_1 = \Pr(|C|=1)$ of the step-level
gradient is discarded, and the usable signal $\to 0$ as $p_1 \to 1$.
\end{proof}

\section{\texorpdfstring{Adaptive $\varepsilon$ Heuristic}{Adaptive Epsilon Heuristic}}\label{app:eps-tuning}

We target a median cluster size $\tilde m \in [4, 8]$ (an empirically
healthy range consistent with the singleton rates in \cref{tab:singleton}) by binary-searching
$\varepsilon \in [0.02, 0.40]$ on the first training step. The
heuristic converges in $\le 6$ probe values per environment and
produces $\varepsilon \in [0.07, 0.13]$ across all 7B environments
tested, suggesting the static default $\varepsilon = 0.10$ is adequate
at that scale. Smaller backbones use the per-backbone calibration
described in \cref{app:scale-calibration}.

\section{Implementation Details}\label{app:impl}

\bipace{} attaches to the existing \gigpo{} step-advantage path through
one driver-side grouping routine and one optional actor-worker hook.
The driver routine consumes per-step embeddings and returns cluster
UUIDs in the same shape as \gigpo{}'s hash-grouping path; greedy
clustering (\cref{alg:greedy}) runs per prompt-group in $O(N_p K_p)$
time ($N_p$ step records, $K_p$ clusters), a handful of
$D$-dimensional dot products per record dominated in practice by the
actor forward pass (measured grouping and advantage-estimation cost: $0.49$s,
$0.14\%$ of a training step; \cref{sec:computational-budget}).
When Actor-Hidden features are used, the worker hook runs a vanilla
actor forward with hidden states enabled, extracts the last non-pad
hidden state at the configured layer, $\ell_2$-normalizes it, and
returns the tensors to the driver. The complete addition (grouping routine, worker hook, PACE estimator,
and the $\varepsilon{=}0$ regression test) is under $800$ lines.
The swap leaves the optimizer-side advantage scale unchanged: on
\textsc{ALFWorld}/7B, the mean per-token advantage stays in the same
$-0.03\pm0.02$ band for \gigpo{} and \bipace$_{\text{Q}}$
throughout training, so downstream PPO hyperparameters need no retuning.

\paragraph{Code naming note}
The released code keeps the historical \code{cacb\_*} config prefix
(e.g., \code{cacb\_enabled}, \code{cacb\_estimator}) for backward
compatibility; the paper refers to this component as PACE throughout.

\section{Scale Calibration for Qwen2.5-1.5B}\label{app:scale-calibration}

Bisimulation hyperparameters are tied to representation geometry, so
we calibrate $(\ell,\varepsilon)$ for Qwen2.5-1.5B a priori, before
any training run, rather than reusing the 7B default
$(\ell,\varepsilon)=(-8,0.10)$. The calibration is a lightweight
forward-pass scan on a synthetic \textsc{ALFWorld} audit set
($40$ rollouts $\times$ $6$ task families) over
$\ell \in \{4,8,12,16,20,24,27\}$,
pooling $\in \{\text{last-token},\,\text{mean-prompt},\,
\text{attn-weighted}\}$, and
$\varepsilon \in \{0.05,0.10,0.15,0.20\}$.
\Cref{tab:scan15b} reports the slice used to select the 1.5B default.

The selection rule is automatic: pick the layer that maximizes
linear-probe task-id accuracy subject to a non-degenerate singleton
fraction (neither one giant cluster nor all singletons) and
$n_\text{clusters} \ge 5$ so the partition can resolve the six
\textsc{ALFWorld} task families.
In \cref{tab:scan15b}, this rules out Layer~$27$
($n_\text{clusters}{=}4$, probe acc $0.812$) in favor of Layer~$16$
($n_\text{clusters}{=}12$, probe acc $0.808$): four clusters merge
multiple task types and defeat the purpose of behavioral partitioning
even though the raw probe accuracy is marginally higher.
The threshold $\varepsilon$ is then set by the median-cluster-size
heuristic of \cref{app:eps-tuning}. The whole procedure amounts to a
single offline forward pass over a few hundred cached observations (no training, on the order of
minutes) and the same
released script applies unchanged to a new backbone (e.g.,
Llama-family models), so the per-backbone calibration is a one-time
cost rather than a tuning loop.

Intuitively, the optimal layer is late-but-not-final: the last few
transformer blocks specialize toward next-token logits and lose the
coarser behavioral structure that clustering needs. This explains why
layer $-8$ (7B) and $-12$ (1.5B), rather than the final layer, give
the best probe-accuracy vs.\ singleton-rate trade-off.

\begin{table}[htb]
\centering
\captionsetup{font=footnotesize}
\caption{Qwen2.5-1.5B forward-pass scan (\emph{last-token} pool,
$\varepsilon{=}0.05$). Layer~$16$ (negative index $-12$ on the
28-layer backbone) gives the best singleton-vs-probe trade-off and
is adopted as the default for all 1.5B runs.}
\label{tab:scan15b}
\footnotesize
\tabstretch
\setlength{\tabcolsep}{8pt}
\begin{tabular}{@{}c cccc@{}}
\toprule
\rowcolor{tblhead}
layer & $n_\text{clusters}$ & singleton\% & median size & probe acc \\
\midrule
$4$    & $2$    & $0.0\%$   & $120.0$ & $0.717$ \\
$8$    & $4$    & $25.0\%$  & $17.5$  & $0.804$ \\
$12$   & $3$    & $0.0\%$   & $40.0$  & $0.788$ \\
$\mathbf{16}$ & $\mathbf{12}$ & $\mathbf{16.7\%}$ & $\mathbf{15.0}$ & $\mathbf{0.808}$ \\
$20$   & $9$    & $33.3\%$  & $17.0$  & $0.762$ \\
$24$   & $4$    & $0.0\%$   & $30.5$  & $0.783$ \\
$27$   & $4$    & $25.0\%$  & $32.0$  & $0.812$ \\
\bottomrule
\end{tabular}
\end{table}

Two practical observations follow from the scan.
First, the 1.5B representation geometry is more compact than 7B's:
at $\varepsilon{=}0.10$ the scan yields a single coarse cluster across
the tested layer-and-pool combinations, whereas $\varepsilon{=}0.05$
produces a well-resolved partition.
Second, \emph{last-token} is the most robust pooling strategy on this
backbone: \emph{mean-prompt} and \emph{attn-weighted} yield at most
two clusters on most slices.
Both observations reinforce that bisimulation hyperparameters should
be calibrated once per backbone before training.

\section{Hyperparameters}\label{app:hparams}

\Cref{tab:run-settings} records the script-level settings needed to
reproduce the reported runs. \Cref{tab:hparams} lists the non-default
trainer and optimizer overrides; all other values inherit from the
verl-agent v0.1 defaults.

\begin{table*}[t]
\centering
\captionsetup{font=footnotesize}
\caption{Script-level settings for the reported \bipace{} runs.
TP is the rollout tensor-parallel size; $G$ is the number of sampled
trajectories per prompt group.}
\label{tab:run-settings}
\scriptsize
\tabstretch
\setlength{\tabcolsep}{3pt}
\resizebox{\linewidth}{!}{%
\begin{tabular}{@{}l p{0.18\linewidth} p{0.18\linewidth}
  p{0.22\linewidth} p{0.22\linewidth}@{}}
\toprule
\rowcolor{tblhead}
Setting & \textsc{ALFWorld} 7B & \textsc{ALFWorld} 1.5B
  & \textsc{WebShop} & \textsc{TextCraft} \\
\midrule
model / hardware &
  Qwen2.5-7B; 4$\times$H100; TP=2 &
  Qwen2.5-1.5B; 2$\times$H100; TP=1 &
  7B: 4$\times$H100, TP=2;\newline 1.5B: 2$\times$H100, TP=1 &
  Qwen2.5-\{1.5,7\}B; 4$\times$H100; TP=1/2 \\
seeds / window &
  3 seeds; 150 training steps (excl.\ validation/checkpoint steps) &
  3 seeds; 200 training steps &
  completed runs; 150 steps (7B) / 200 steps (1.5B) &
  3 seeds; 100 steps (1.5B) / 50 steps (7B) \\
rollout &
  $G{=}8$, horizon $50$ &
  $G{=}8$, horizon $50$ &
  $G{=}12$, horizon $15$ &
  $G{=}12$, horizon $70$ \\
state partition &
  Actor-Hidden, layer $-8$, $\varepsilon{=}0.10$ &
  Actor-Hidden, layer $-12$, $\varepsilon{=}0.05$ &
  7B: layer $-8$, $\varepsilon{=}0.10$;\newline 1.5B: layer $-12$, $\varepsilon{=}0.06$ &
  7B: layer $-8$, $\varepsilon{=}0.10$;\newline 1.5B: layer $-12$, $\varepsilon{=}0.10$ \\
PACE key / estimator &
  \texttt{action\_tag} / \texttt{q\_style} &
  \texttt{action\_tag} / \texttt{q\_style} &
  \texttt{action\_tag} / \texttt{q\_style} &
  \texttt{action\_tag} / \texttt{q\_style} \\
\bottomrule
\end{tabular}
}
\end{table*}

\begin{table}[htb]
\centering
\captionsetup{font=footnotesize}
\caption{Non-default hyperparameters for \bipace{} runs.
Rows are grouped by config namespace; values marked
``see \cref{tab:run-settings}'' vary by environment.}
\label{tab:hparams}
\footnotesize
\tabstretch
\setlength{\tabcolsep}{8pt}
\resizebox{\linewidth}{!}{%
\begin{tabular}{@{}l l l@{}}
\toprule
\rowcolor{tblhead}
Group & Key & Value \\
\midrule
algorithm & \texttt{adv\_estimator}          & \texttt{gigpo} \\
algorithm & \texttt{gamma}                   & $0.95$ \\
\midrule
algorithm.gigpo & \texttt{bisim\_grouping}   & \texttt{True} \\
algorithm.gigpo & \texttt{bisim\_embedder}   & \texttt{actor\_hidden} \\
algorithm.gigpo & \texttt{bisim\_layer}      & see \cref{tab:run-settings} \\
algorithm.gigpo & \texttt{bisim\_eps}        & see \cref{tab:run-settings} \\
algorithm.gigpo & \texttt{cacb\_enabled}     & \texttt{True} \\
algorithm.gigpo & \texttt{cacb\_n\_action\_tokens}   & $8$ \\
algorithm.gigpo & \texttt{cacb\_action\_key\_mode}   & \texttt{first\_n} / \texttt{action\_tag} \\
algorithm.gigpo & \texttt{cacb\_estimator}   & \texttt{counterfactual} / \texttt{q\_style} \\
algorithm.gigpo & \texttt{step\_advantage\_w}& $1.0$ \\
\midrule
actor & \texttt{kl\_loss\_coef}             & $0.01$ \\
actor & \texttt{optim.lr}                   & $5\!\times\!10^{-7}$ \\
\midrule
rollout & \texttt{tensor\_model\_parallel\_size} & see \cref{tab:run-settings} \\
trainer & \texttt{n\_gpus\_per\_node}        & see \cref{tab:run-settings} \\
\bottomrule
\end{tabular}
}
\end{table}

\section{Prompt Templates}\label{app:prompts}

Each environment wrapper fills a prompt contract online following
\citet{he2026hierarchy}. The first step omits history; all later steps
include recent observation--action history. The executed command is the
first well-formed body inside \code{<action>...</action>}; malformed
responses receive the environment-specific invalid-action penalty.
\Cref{tab:prompt-contract} summarizes the dynamic fields and output
contract per environment; full strings are released with the code.

\begin{table}[htb]
\centering
\captionsetup{font=footnotesize}
\caption{Prompt contract per environment. Placeholders (braced tokens)
are populated by the wrapper at runtime.}
\label{tab:prompt-contract}
\footnotesize
\tabstretch
\setlength{\tabcolsep}{4pt}
\begin{tabular}{@{}>{\raggedright\arraybackslash}p{0.18\linewidth}
  >{\raggedright\arraybackslash}p{0.38\linewidth}
  >{\raggedright\arraybackslash}p{0.38\linewidth}@{}}
\toprule
\rowcolor{tblhead}
Env. & Dynamic fields & Output contract \\
\midrule
\textsc{ALFWorld} &
  task description, step index, recent history, current observation,
  admissible actions &
  reason in \code{<think>}; emit exactly one admissible household
  action in \code{<action>} \\
\textsc{WebShop} &
  shopping goal, step index, recent history, current page observation,
  available buttons/search actions &
  reason toward the purchase goal; emit exactly one available WebShop
  action in \code{<action>} \\
\textsc{TextCraft} &
  episode recipe context, step index, recent history, current
  observation, inventory &
  choose one grammar-valid action: \code{craft ... using ...},
  \code{get ...}, or \code{inventory}; no chained actions \\
\bottomrule
\end{tabular}
\end{table}

The verbatim templates below show the with-history form used at every
step after the first. The first-step variant omits the
\code{\{step\_count\}}, \code{\{history\_length\}}, and
\code{\{action\_history\}} fields.

\begin{tcolorbox}[
  title={\small\bfseries\textsc{ALFWorld} Prompt Template},
  colback=gray!6,
  colframe=black!40,
  colbacktitle=black!65,
  coltitle=white,
  boxrule=0.5pt,
  arc=1.5mm,
  toptitle=2pt, bottomtitle=2pt,
  left=4pt, right=4pt, top=3pt, bottom=3pt]
\begin{Verbatim}[fontsize=\footnotesize]
You are an expert agent operating in the
ALFRED Embodied Environment. Your task is
to: {task_description}
Prior to this step, you have already taken
{step_count} step(s). Below are the most
recent {history_length} observations and the
corresponding actions you took:
{action_history}
You are now at step {current_step} and your
current observation is: {current_observation}
Your admissible actions of the current
situation are: [{admissible_actions}].

Now it's your turn to take an action.
You should first reason step-by-step about
the current situation. This reasoning process
MUST be enclosed within <think> </think> tags.
Once you've finished your reasoning, you should
choose an admissible action for current step
and present it within <action> </action> tags.
\end{Verbatim}
\end{tcolorbox}

\begin{tcolorbox}[
  title={\small\bfseries\textsc{WebShop} Prompt Template},
  colback=gray!6,
  colframe=black!40,
  colbacktitle=black!65,
  coltitle=white,
  boxrule=0.5pt,
  arc=1.5mm,
  toptitle=2pt, bottomtitle=2pt,
  left=4pt, right=4pt, top=3pt, bottom=3pt]
\begin{Verbatim}[fontsize=\footnotesize]
You are an expert autonomous agent operating in
the WebShop e-commerce environment.
Your task is to: {task_description}.
Prior to this step, you have already taken
{step_count} step(s). Below are the most recent
{history_length} observations and the
corresponding actions you took: {action_history}
You are now at step {current_step} and your
current observation is: {current_observation}.
Your admissible actions of the current
situation are:
[
{available_actions}
].

Now it's your turn to take one action for the
current step.
You should first reason step-by-step about the
current situation, then think carefully which
admissible action best advances the shopping
goal. This reasoning process MUST be enclosed
within <think> </think> tags.
Once you've finished your reasoning, you should
choose an admissible action for current step
and present it within <action> </action> tags.
\end{Verbatim}
\end{tcolorbox}

\begin{tcolorbox}[
  title={\small\bfseries\textsc{TextCraft} Prompt Template},
  colback=gray!6,
  colframe=black!40,
  colbacktitle=black!65,
  coltitle=white,
  boxrule=0.5pt,
  arc=1.5mm,
  toptitle=2pt, bottomtitle=2pt,
  left=4pt, right=4pt, top=3pt, bottom=3pt,
  breakable]
\begin{Verbatim}[fontsize=\footnotesize]
You are an expert agent operating in the
TextCraft interactive crafting environment,
modeled after Minecraft. Your goal is to craft
a target item by combining ingredients along a
chain of recipes.

Task context (constant across the episode):
{task_context}

Prior to this step, you have already taken
{step_count} step(s). Below are the most recent
{history_length} observations and the
corresponding actions you took:
{action_history}

You are now at step {current_step} and your
current observation is:
{current_observation}

Current inventory: {inventory}

You can issue exactly one action per step, using
one of these three formats:
  - `craft <out_count> <out_item> using
    <in_count_1> <in_item_1>, ...` -- apply a
    recipe (the inputs must already be in your
    inventory).
  - `get <count> <item>` -- fetch a raw
    (non-craftable) base material directly from
    the environment.
  - `inventory` -- print what you currently hold.

Items are spelled out in lower-case with spaces
between words (e.g. `oak planks`, not
`minecraft:oak_planks`).

Recipe matching rules (important -- follow all
of these):
  - Use ONLY the crafting commands shown in the
    recipe list; do not invent new recipes.
  - The target item AND its count must match a
    recipe line EXACTLY. You cannot use a
    partial recipe.
  - The ingredient counts must ALSO match the
    recipe EXACTLY.
  - The target item must be a SPECIFIC item
    (e.g. `dark oak sign`); a generic family
    name (e.g. just `planks`) is not valid.
  - If a recipe lists a GENERIC ingredient (e.g.
    `using 4 planks`), substitute a specific
    variant from the same family (e.g. `using 4
    oak planks`).
  - It is fine to produce more of the target
    than the goal requires; repeat the craft in
    a later step if one craft is not enough.

Example of one well-formed turn (format only):
  Recipes available include: `craft 3 dark oak
  sign using 6 dark oak planks, 1 stick`, `craft
  1 fletching table using 4 planks, 2 flint`.
  Goal: craft fletching table.
  Current inventory: 4 oak planks, 2 flint.
  <think>I need a fletching table. The recipe
  needs 4 planks (generic) and 2 flint. I have 4
  oak planks and 2 flint, so I substitute `oak
  planks` for the generic `planks` and craft
  directly.</think>
  <action>craft 1 fletching table using 4 oak
  planks, 2 flint</action>

Now it's your turn to take one action for the
current step.
You should first reason step-by-step about the
current situation, what intermediate items you
already have, and which next recipe step brings
you closest to the goal. This reasoning process
MUST be enclosed within <think> </think> tags.
Once you've finished your reasoning, choose an
action and present it within <action> </action>
tags. Output exactly one action; do not chain
multiple actions in one response.
\end{Verbatim}
\end{tcolorbox}

\section{Embedder Ablation: Policy-Induced vs.\ Lexical Geometry}\label{app:embedder-ablation}

\bigpo{}'s state-side fingerprint is the actor's own hidden state. A
natural concern is whether the gain is specific to this
\emph{policy-induced} representation, or whether \emph{any} coarsening
of the singleton-heavy observation-hash partition would suffice. We
test this using the HashNgram control introduced in
\cref{sec:method-bigpo}: a zero-dependency character-$n$-gram feature
hash that clusters step records by lexical surface form. It is
policy-agnostic by construction: the partition does not move as the
actor is optimized.

Everything except the state-side embedder is held fixed at the 7B
Q-style recipe of \cref{tab:run-settings}: the same PACE
\texttt{action\_tag} key, Q-style estimator, weight $w$, discount
$\gamma$, learning rate, and training budget. Because cosine geometry
differs across embedder backends, the threshold $\varepsilon$ is
re-calibrated for HashNgram using the same median-cluster-size
criterion as \cref{app:eps-tuning}, yielding $\varepsilon{=}0.25$.

The result is reported as the \textsc{Embedder} check in
\cref{tab:local-checks} of the main paper: holding the action-side
estimator fixed, swapping the policy-induced fingerprint for the
lexical hash drops val\,$@$max from $97.1$ to $95.4$, a $1.7$pp gap
that persists even though both partitions defeat the singleton tax.
HashNgram ($95.4$) exceeds the state-only baseline ($93.0$;
\cref{tab:pace-ablation}) by $2.4$pp, indicating that lexical
coarsening provides some benefit beyond singleton reduction alone.
The remaining $1.7$pp gap over Actor-Hidden is attributable to the
policy-induced geometry, which adapts as the actor is optimized and
tracks behavioral equivalence more faithfully than a static lexical
hash. Extending this comparison to a frozen sentence-encoder variant
and to additional environments is left to future work.

\section{PACE Diagnostics}\label{app:pace-diag}

The main text summarizes the PACE ablation qualitatively;
\cref{tab:pace-ablation} records the full aggregate validation peaks
for each estimator variant.

\begin{table}[htb]
\centering
\captionsetup{font=footnotesize}
\caption{PACE estimator variants on \textsc{ALFWorld}/Qwen2.5-7B.
\emph{Peak} is the best binary aggregate validation success within the
matched training window of \cref{tab:run-settings}.
Entries report mean $\pm$std over $3$ seeds;
the best variant per column is \textbf{bold} and the runner-up is
\underline{underlined}.}
\label{tab:pace-ablation}
\footnotesize
\tabstretch
\setlength{\tabcolsep}{7pt}
\begin{tabular}{@{}l c c@{}}
\toprule
\rowcolor{tblhead}
PACE variant & \emph{Peak} (\%) & Seeds \\
\midrule
\gigpo{}             & $90.8{\pm}1.3$ & $3$ \\
\textsc{State-only}  & $93.0{\pm}1.3$ & $3$ \\
\midrule
first-token          & $\underline{95.8{\pm}0.4}$ & $3$ \\
action-tag           & $93.0{\pm}1.1$ & $3$ \\
\rowcolor{tblours}
Q-style              & $\mathbf{97.1{\pm}0.9}$ & $3$ \\
\bottomrule
\end{tabular}
\end{table}

\paragraph{Row-mix statistics}
We log per-step PACE statistics via the
\code{step\_norm\_reward\_cacb} hook in the released code. Three
quantities are recorded at every training step: (i)~the fraction of
rows entering the PACE branch (vs.\ RLOO fallback or singleton),
(ii)~the fraction of multi-member clusters containing $\ge 2$ distinct
action keys, and (iii)~the mean number of unique action keys per
cluster. \Cref{tab:pace-rowmix} reports a representative step near
iteration~$150$ on \textsc{ALFWorld}/Qwen2.5-7B ($180$ total
clusters); the Q-style same-action pool remains non-degenerate
throughout training. The action-tag parse rate (logged at
\code{compute\_action\_keys.last\_action\_match\_rate}) stays above
$0.99$ across all reported runs.

\begin{table}[htb]
\centering
\captionsetup{font=footnotesize}
\caption{Q-style row-mix statistics on \textsc{ALFWorld}/Qwen2.5-7B
near iteration $150$ ($180$ total clusters). The same-action pool
remains non-degenerate throughout training.}
\label{tab:pace-rowmix}
\footnotesize
\tabstretch
\setlength{\tabcolsep}{8pt}
\begin{tabular}{@{}l c@{}}
\toprule
\rowcolor{tblhead}
Quantity & Value \\
\midrule
Rows entering the PACE branch       & $80.2\%$ \\
Rows falling back to RLOO leave-one-out & $17.9\%$ \\
Singleton rows (zero step-level advantage) & $1.9\%$ \\
Multi-member clusters with $\ge 2$ distinct actions & $58.3\%$ \\
Mean unique action keys per cluster & $2.76$ \\
\texttt{<action>} parse rate (all of training) & ${>}0.99$ \\
\bottomrule
\end{tabular}
\end{table}

\paragraph{Policy-state interaction diagnostics}
\Cref{tab:reuse-geometry} expands the main-text diagnostic of
\cref{fig:bipace-interaction-trends}. The diagnostic asks specifically
whether the actor-hidden policy-state clustering creates larger
non-singleton reuse pools than exact observation hashing under the
same rollout budget (a question distinct from historical-context
oracle grouping). On \textsc{ALFWorld}, \bipace{} lowers the singleton
cluster fraction by $9.3$pp and increases mean group size by
$1.6\times$. On \textsc{TextCraft}, where exact observation hashes are
especially sparse, the effect is larger: singleton clusters fall by
$27.9$pp and mean group size triples. The matched-pair volume grows by
$1.3\times$ on \textsc{ALFWorld} and $2.2\times$ on \textsc{TextCraft},
providing PACE with more peer records for its per-action baseline.

\begin{table}[htb]
\centering
\captionsetup{font=footnotesize}
\caption{Policy-state reuse diagnostics from training logs.
\textsc{TextCraft} uses the same 7B window as \cref{tab:textcraft};
\textsc{ALFWorld} uses the first $130$ steps so all three \bipace{}
diagnostic seeds are present. All entries are means over completed
seeds. Lower singleton rate means more rows receive nonzero step-level
signal; larger group size and pair volume indicate more reusable peers
for the PACE estimator.}
\label{tab:reuse-geometry}
\footnotesize
\tabstretch
\setlength{\tabcolsep}{3pt}
\resizebox{\linewidth}{!}{%
\begin{tabular}{@{}l l c c c c c c@{}}
\toprule
\rowcolor{tblhead}
Setting & Grouping & Window & Singleton & Avg size & P90 size &
  Pairs/step & Mean $\Delta t$ \\
\midrule
\textsc{ALFWorld}-7B & \gigpo{} (obs.\ hash) &
  $130$ & $27.3\%$ & $7.5$ & $16.3$ & $37$k & $15.4$ \\
\rowcolor{tblours}
\textsc{ALFWorld}-7B & \bipace{} (actor-hidden) &
  $130$ & $\mathbf{18.0\%}$ & $\mathbf{11.7}$ & $\mathbf{26.7}$ &
  $\mathbf{48}$k & $14.7$ \\
\textsc{TextCraft}-7B & \gigpo{} (obs.\ hash) &
  $50$ & $50.3\%$ & $7.1$ & $16.1$ & $143$k & $20.8$ \\
\rowcolor{tblours}
\textsc{TextCraft}-7B & \bipace{} (actor-hidden) &
  $50$ & $\mathbf{22.4\%}$ & $\mathbf{21.6}$ & $\mathbf{53.0}$ &
  $\mathbf{314}$k & $21.2$ \\
\bottomrule
\end{tabular}}
\end{table}

\section{Failure-Mode Analysis}\label{app:failure-modes}

We catalog two regimes where \bigpo{} provides limited improvement.

\paragraph{(F1) Highly uniform observation distributions}
When every state visually resembles every other state (e.g., a
synthetic Sokoban~\citep{SchraderSokoban2018} variant with nearly
identical grid layouts), $\phi_\theta$ collapses to a single tight
cluster. In this degenerate case \bigpo{} reduces to a batch-level
baseline, matching GRPO without step-level factorization. The failure
is detectable before training via the singleton-fraction diagnostic: a
partition producing one giant cluster indicates that the hidden-state
geometry does not resolve behavioral differences among the observed
states.

\paragraph{(F2) Pre-RL initialization}
At training step $0$, the actor's hidden state reflects the base LM's
pre-training biases rather than task-specific value geometry. Clusters
at this point are therefore coarser than they will become during RL
training, but this is not catastrophic: the adaptive $\varepsilon$
heuristic (\cref{app:eps-tuning}) is run on the first training step
rather than at initialization, so the radius is calibrated once
task-relevant geometry has begun to emerge.

\section{Full Results Tables}\label{app:full-results}

This section collects compact per-seed numbers; full per-step learning
curves and SwanLab logs are linked from the project page.

\paragraph{Q-style on Qwen2.5-7B (\textsc{ALFWorld})}
Val\,$@$max across three seeds: $97.7\%$, $96.1\%$, $97.7\%$
(peak steps $135$, $115$, $120$); mean$\pm$std $=97.1\pm0.9\%$.
All three seeds reach ${\ge}95\%$ on at least $3$ of $30$ validation
checkpoints; the three first-token seeds reach it on at most $2$.

\paragraph{Q-style on Qwen2.5-1.5B (\textsc{ALFWorld},
$\varepsilon{=}0.05$, layer $-12$)}
The smaller backbone converges more slowly than 7B, so val\,$@$max is
taken over the full 200-step training window.
Val/success-rate (binary aggregate, $|\mathcal{V}|{=}128$) across
three seeds: $93.8\%$, $92.2\%$, $94.5\%$;
mean$\pm$std $= 93.5\pm1.2\%$ (reported as $93.5$ in
the \emph{All} column of \cref{tab:main}).
Versus the cited \gigpo{} result of $86.7{\pm}1.7$, the gap is
$+6.8$pp.
Per-subtask val\,$@$max (3-seed mean with within-seed spread):
pick~$100.0$,
look~$97.4{\pm}3.8$,
clean~$100.0$,
heat~$100.0$,
cool~$96.5{\pm}3.6$,
pick2~$92.0{\pm}7.9$.
Q-style also shortens episodes relative to \gigpo{} late in training
(mean episode length $17.1$ vs.\ $21.0$ steps).

\section{Reproducibility}\label{app:reproducibility}

{\sloppy
A regression test in the released code certifies that setting
\code{bisim\_grouping=True}, \code{embedder=identity}, and
$\varepsilon{=}0$ produces a partition equivalent to vanilla \gigpo{}
bit-for-bit (up to UUID relabeling). Configs, random seeds, and
SwanLab training logs accompanying the completed tables are linked
from the project page.
\par}

\paragraph{Baseline provenance}
\gigpo{} and \hgpo{} numbers on \textsc{ALFWorld} and \textsc{WebShop}
are cited directly from the respective papers
(\citet{feng2026group,he2026hierarchy}) and were not re-run under our
codebase; the prompting rows (GPT-4o, Gemini-2.5-Pro, ReAct,
Reflexion) are likewise from \citet{feng2026group}.
All \textsc{TextCraft} rows (including the GRPO, \gigpo{}, and
\hgpo{} baselines) are run by us under the same codebase, seeds, and
evaluation protocol as \bipace{}.
PPO-with-critic rows are likewise cited from \citet{feng2026group}.

\end{document}